\documentclass{article} % For LaTeX2e
\usepackage[accepted]{icml2025}
\usepackage{times}

\usepackage{helvet}    % DO NOT CHANGE THIS
\usepackage{courier}   % DO NOT CHANGE THIS
\usepackage{graphicx}  % DO NOT CHANGE THIS
\usepackage{caption}   % DO NOT CHANGE THIS AND DO NOT ADD ANY OPTIONS TO IT
\usepackage{algorithm}
\usepackage{algorithmic}
\usepackage[algo2e]{algorithm2e}

\usepackage{hyperref}

\usepackage{enumitem}
\usepackage{mathtools}
\usepackage{newfloat}
\usepackage{listings}
\DeclareCaptionStyle{ruled}{labelfont=normalfont,labelsep=colon,strut=off}
\floatstyle{ruled}
\newfloat{listing}{tb}{lst}
\floatname{listing}{Listing}

\usepackage{natbib} % nice set of citation styles

\usepackage{booktabs} 
\usepackage{silence}
\usepackage{tikz}           % TikZ for graphics
\usepackage{ctable}         % ensure loaded after xcolor

\usepackage{amssymb}
\usepackage{amsthm}
\usepackage{lipsum}
\usepackage{mdframed}
\usepackage{multirow}
\usepackage{comment}
\usepackage{url}
\usepackage{subfig}
\usepackage{scalerel}
\usepackage{empheq}
\usepackage{xcolor}
\usepackage{mathnotation}
\usepackage{amssymb}
\usepackage{amsthm} 
\usepackage{lipsum}

\usepackage{mdframed}

\usepackage{multirow}

\usepackage{mathtools}

\DeclareCaptionStyle{ruled}{labelfont=normalfont,labelsep=colon,strut=off} % DO NOT CHANGE THIS
\usepackage{amsmath,amsfonts,bm,nicefrac}

\usepackage[
  noabbrev,
  sort,
  nameinlink,
  capitalise
]{cleveref}

\def\eqref#1{equation~\ref{#1}}
\def\1{\bm{1}}

\DeclareMathAlphabet{\mathsfit}{\encodingdefault}{\sfdefault}{m}{sl}
\SetMathAlphabet{\mathsfit}{bold}{\encodingdefault}{\sfdefault}{bx}{n}

\newcommand{\E}{\mathbb{E}}

\DeclareMathOperator*{\argmax}{arg\,max}

\newtheorem{theorem}{Theorem}
\newtheorem{lemma}{Lemma}
\newtheorem{definition}{Definition}

\newtheorem{innercustomgeneric}{\customgenericname}
\providecommand{\customgenericname}{}
\newcommand{\newcustomtheorem}[2]{%
  \newenvironment{#1}[1]
  {%
   \renewcommand\customgenericname{#2}%
   \renewcommand\theinnercustomgeneric{##1}%
   \innercustomgeneric
  }
  {\endinnercustomgeneric}
}

\newcustomtheorem{manualtheorem}{Theorem}
\newcustomtheorem{manualdefinition}{Definition}
\newcustomtheorem{manualremark}{Remark}
\newcustomtheorem{manuallemma}{Lemma}
\newcustomtheorem{manualassumption}{Assumption}
\newcustomtheorem{manualproposition}{Proposition}

\usepackage{soul}

\usepackage{mathnotation}
\usepackage{xspace}

\usepackage{url}
\usepackage{subfig}
\usepackage{comment}
\usepackage{multirow}
\usepackage{booktabs}

\usepackage{graphicx}       
\usepackage{caption}                    
\usepackage{float}      
\usepackage{subfig}

\renewcommand\labelenumi{(\roman{enumi})}
\renewcommand\theenumi\labelenumi

\newcommand{\RETURN}{\textbf{return}}

\usepackage{amsmath,amsfonts,bm,nicefrac,amssymb}
\usepackage{amsthm}

\newcommand{\Algname}{{\color{red!50!black}\texttt{Stochastic-Power-HOOT}}\xspace}

\SetAlFnt{\small}
\SetAlCapFnt{\small}
\SetAlCapNameFnt{\small}
\usepackage{braket}
\usepackage{amssymb}
\usepackage{scalerel}
\usepackage{multicol}

\newmdtheoremenv{theorembox}{Theorem}
\newmdtheoremenv{lemmabox}{Lemma}
\newmdtheoremenv{definitionbox}{Definition}

\usepackage{empheq}
\usepackage{xcolor}
\definecolor{lightgreen}{HTML}{90EE90}

\usepackage{todonotes}
\usepackage{blindtext}

\usepackage{dsfont}

\newcommand{\bE}{\mathbb{E}}
\newcommand{\bP}{\mathbb{P}}

\newcommand{\cv}[1]{\overset{#1}{\underset{n \rightarrow \infty}{\longrightarrow}}}
\newtheorem{remark}{Remark}

\usepackage{xcolor}
\definecolor{Bleu}{RGB}{37,144,230}
\definecolor{Red}{HTML}{CD5C5C}
\icmltitlerunning{Power Mean Estimation in Stochastic Continuous Monte Carlo Tree Search}

\begin{document}

\twocolumn[

\icmltitle{Power Mean Estimation in Stochastic Continuous Monte Carlo Tree Search
}

\icmlsetsymbol{equal}{*}

\begin{icmlauthorlist}
\icmlauthor{Tuan Dam}{1}
\end{icmlauthorlist}

\icmlaffiliation{1}{Hanoi University of Science and Technology, Hanoi, Vietnam}

\icmlcorrespondingauthor{Tuan Dam}{tuandq@soict.hust.edu.vn}

% You may provide any keywords that you
% find helpful for describing your paper; these are used to populate
% the "keywords" metadata in the PDF but will not be shown in the document
\icmlkeywords{Machine Learning, ICML}

\vskip 0.3in
]
\printAffiliationsAndNotice{}
\begin{abstract}
Monte Carlo Tree Search (MCTS) has demonstrated success in online planning for deterministic environments, yet significant challenges remain in adapting it to stochastic Markov Decision Processes (MDPs), particularly in continuous state-action spaces. Existing methods, such as HOOT, which combines MCTS with the Hierarchical Optimistic Optimization (HOO) bandit strategy, address continuous spaces but rely on a logarithmic exploration bonus that lacks theoretical guarantees in non-stationary, stochastic settings. Recent advancements, such as POLY-HOOT, introduced a polynomial bonus term to achieve convergence in deterministic MDPs, though a similar theory for stochastic MDPs remains undeveloped.
In this paper, we propose a novel MCTS algorithm, \Algname, designed for continuous, stochastic MDPs. \Algname integrates a power mean as a value backup operator, alongside a polynomial exploration bonus to address the non-stationarity inherent in continuous action spaces. Our theoretical analysis establishes that \Algname converges at a polynomial rate of $\mathcal{O}(n^{-\zeta})$, $\zeta \in (0,1/2)$, where \( n \) is the number of visited trajectories, thereby extending the non-asymptotic convergence guarantees of POLY-HOOT to stochastic environments. Experimental results on stochastic tasks validate our theoretical findings, demonstrating the effectiveness of \Algname in continuous, stochastic domains.
\end{abstract}

\section{Introduction}
\label{introduction}

Monte Carlo Tree Search (MCTS) has become a cornerstone of modern decision-making and planning, yielding unprecedented successes in deterministic domains such as Go, Chess, and Shogi~\citep{kocsis2006improved,browne2012survey}. However, real-world applications often involve \emph{continuous} actions and \emph{stochastic} dynamics—factors that significantly complicate both the theoretical convergence and practical performance of MCTS. Traditional methods typically assume discrete action sets with near-stationary rewards, making them ill-suited for continuous, noisy environments where value estimates shift over time and enumerating possible actions is infeasible. This gap has constrained MCTS’s applicability to a narrow set of problems, limiting its promise in robotics, control, and other high-dimensional tasks.

Existing efforts to adapt MCTS to continuous settings include HOOT~\citep{mansley2011sample}, which couples MCTS with a hierarchical bandit (HOO). Yet, HOOT relies on logarithmic exploration bonuses that fail to guarantee convergence when state-action values are non-stationary. POLY-HOOT~\citep{mao2020poly} subsequently introduced polynomial bonus terms and established strong results in \emph{deterministic} continuous MDPs. Unfortunately, no analogous guarantee exists for \emph{stochastic} environments, a critical shortcoming for many real-world domains where transition and reward uncertainties are prevalent.

In this paper, we bridge that gap by proposing \Algname, a novel MCTS algorithm designed for continuous \emph{and} stochastic MDPs. \Algname uses a \emph{power mean} as its value backup operator, combining it with a polynomial exploration bonus to rigorously handle non-stationary value estimates across the tree. This design extends the theoretical guarantees of POLY-HOOT beyond deterministic environments, ensuring that \Algname converges at a polynomial rate in the presence of stochastic transitions. Moreover, our approach avoids naive action-space discretization by focusing search adaptively where it matters most. 

Our key contributions include:
\begin{itemize}
    \setlength{\itemsep}{2pt}  % Adjust space between items
    \setlength{\parskip}{0pt}  % Reduce paragraph spacing
    \setlength{\parsep}{0pt}   % Reduce spacing between item text and preceding text
    \item \textit{A power mean backup operator for stochastic, continuous MCTS.} 
    Power-mean updates naturally handle non-stationary dynamics without the instabilities often seen in simple averaging.

    \item \textit{A polynomial exploration bonus tailored to stochastic MDPs.} 
    We establish that \Algname achieves convergence at rate $\mathcal{O}(n^{-\zeta})$, $\zeta \in (0,1/2)$, matching the known results of POLY-HOOT~\cite{mao2020poly}, but now for \textit{stochastic} domains.

    \item \textit{Comprehensive empirical validation.} 
    Experiments on robotic tasks show \Algname outperforms prior MCTS variants (e.g., HOOT, discretized-UCT, Voronoi MCTS) in final performance and robustness, confirming both its practical viability and theoretical soundness.
\end{itemize}

\section{Related Works}
Monte Carlo Tree Search (MCTS) has proven highly effective in deterministic environments, famously demonstrated by UCT~\cite{kocsis2006improved} for discrete actions. Beyond UCT, alternative exploration strategies have emerged, including entropy regularization methods like MENTS~\citep{xiao2019maximum}, RENTS and TENTS~\citep{dam2021convex,dam2024unified}, and Boltzmann-based approaches~\citep{painter2023monte}, though these rely on temperature parameters that may impede convergence. 
However, adapting MCTS to continuous or stochastic domains remains non-trivial. Early works such as HOOT~\cite{mansley2011sample} introduced hierarchical bandit mechanisms for continuous actions, but relied on logarithmic bonus terms ill-suited for non-stationary value estimates. 
Recent advances in power mean estimation for MCTS include the work of \citet{dam2020generalized,dam2024unified}, which introduced power mean backups for discrete, deterministic environments, and \citet{pmlr-v244-dam24a}, which extended power mean estimation to discrete, stochastic MDPs. Our work differs by addressing continuous action spaces in stochastic settings. Augmenting MCTS with progressive widening~\cite{auger2013continuous} can mitigate sparse sampling in continuous spaces, yet often struggles under deep planning or large exploration demands. Meanwhile, recent methods such as POLY-HOOT~\cite{mao2020poly} improved convergence in deterministic tasks by adopting polynomial exploration, although extending to stochastic settings required further analysis.
For continuous action spaces, \citet{kim2020monte} proposed Voronoi MCTS using Voronoi partitioning with regret bounds, but limited to deterministic settings. 
% Recent deep RL + MCTS methods such as AlphaZero/MuZero and their continuous-action variants represent complementary approaches, excelling in domains with learned models.
Our algorithm, \Algname, advances this line of work by combining a power-mean value backup with polynomial bonuses, thereby addressing the dual challenges of continuous actions and non-stationary dynamics while providing theoretical guarantees for stochastic MDPs.

\section{Setup and Notations}
We consider infinite-horizon discounted MDPs $(S, A, T, R, \gamma)$ with finite state space $S \subseteq \mathbb{R}^{n}$, continuous action space $A \subseteq \mathbb{R}^{m}$, stochastic transitions $T: S \times A \rightarrow \mathcal{P}(S)$, bounded rewards $R: S \times A \rightarrow [0, R_{\max}]$, and discount factor $\gamma \in (0,1)$.

For policy $\pi: S \rightarrow A$, we define value functions:
\begin{align}
V^{\pi}(s) &= \mathbb{E}_{\pi}\left[\sum_{t=0}^{\infty} \gamma^{t} R(s_t, a_t) \mid s_0 = s\right] \nonumber \\
Q^{\pi}(s, a) &= \mathbb{E}_{\pi}\left[\sum_{t=0}^{\infty} \gamma^{t} R(s_t, a_t) \mid s_0 = s, a_0 = a\right] \nonumber
\end{align}

The objective is finding optimal policy $\pi^*$ such that $V^{\pi^*}(s) = V^*(s) = \sup_{\pi} V^{\pi}(s)$. We assume access to a generative model providing sampled transitions and rewards.
\section{Background material}
In this section, we first provide some background knowledge about Markov Decision Processes, and then we give an overview of Monte Carlo Tree Search. 
\paragraph{Monte Carlo Tree Search (MCTS)} 
MCTS \cite{browne2012survey} combines Monte Carlo sampling with tree-based exploration for planning under uncertainty. The algorithm iteratively: (1) selects nodes based on statistical information, (2) expands the tree, (3) evaluates new nodes via rollouts, and (4) backpropagates rewards. Effectiveness depends on the value update operator and node selection strategy.
\paragraph{MCTS Formalization} 
MCTS grows a planning tree by collecting trajectories from initial state $s_0$ to depth $D$ or leaf nodes. Playout policy $\pi_0$ estimates terminal values. After $t$ trajectories, the algorithm outputs best action estimate $\widehat{a}_t$ and value estimate $\widehat{V}_t(s_0)$.

Performance is measured by convergence rate $r(t)$:
\begin{align}
\mathbb{E}[V^\star(s_0) - Q^\star(s_0, \widehat{a}_t)] &\leq r(t) \nonumber\\ 
\text{or } |\mathbb{E}[V^\star(s_0) - \widehat{V}_t(s_0)]| &\leq r(t) \nonumber
\end{align}

Value functions are defined inductively: $\widetilde{V}(s_D) = V_0(s_D)$ at leaves, and for $d < D$:
\begin{align}
    \widetilde{Q}(s_d, a) &= r(s_d, a) + \gamma \sum_{s_{d+1}} \mathcal{P}(s_{d+1} | s_d, a) \widetilde{V}(s_{d+1}) \nonumber \\ 
    \widetilde{V}(s_d) &= \max_{a} \widetilde{Q}(s_d, a),\nonumber
\end{align}
where \( r(s_d, a) \) represents the expected reward when taking action \( a \) in state \( s_d \). Thus, we have the following approximation bound:
\[
    |Q^\star(s_0, a) - \widetilde{Q}(s_0, a)| \leq \gamma^D \|V^\star - V_0\|{\infty},
\]
where the supremum norm is taken over states reachable within \( D \) steps from \( s_0 \). The goal of MCTS is to minimize the convergence rate \( r(t) \) by accurately estimating \( \widetilde{Q}(s_0, a) \) and \( \widetilde{V}(s_0) \), ultimately approximating \( Q^\star(s_0, a) \) and identifying the optimal action \( a^\star = \arg\max_{a} Q^\star(s_0, a) \) at the root node.
\paragraph{Hierarchical Optimistic Optimization (HOO)}
HOO~\citep{bubeck2011x} tackles continuous action spaces by organizing them into a binary partition tree. Each node represents a subset of the action space, which is split into two smaller subsets at each level. In each round, HOO traverses from the root to a leaf, favoring child nodes with larger upper confidence bounds ($B$-values). Let $(h,i)$ index the node at depth $h$ and position $i$, with $\mathcal{P}_{h,i}$ as its corresponding subset. The number of visits to all descendants of $(h,i)$ is $T_{h,i}(n)$, and the empirical reward mean is $\widehat{f}_{h,i}(n)$. HOO then forms a UCB-like bound $U_{h,i}(n)$ with a logarithmic bonus, and action is selected as 
\[
a = \arg\max \left\{ \widehat{f}_{h,i}(n) + C\sqrt{\frac{\log n}{T_{h,i}(n)}} + \nu \rho^{h} \right\},
\]
where $C, \nu, \rho, h$ are constants. Searching proceeds top-down, always expanding toward the child with the higher $B$-value, until a leaf is chosen as the action. \citet{mansley2011sample} extends HOO in MCTS and proposes HOOT. However, like \citet{kocsis2006bandit}, this algorithm uses a logarithmic bonus for exploration rather than a polynomial one. As noted by \citet{shah2020non}, this choice leads to unclear convergence guarantees because non-stationary rewards lack concentration. \citet{mao2020poly} propose and establish non-asymptotic convergence guarantees for POLY-HOOT to enhance the HOO strategy with a polynomial bonus term to account for the non-stationarity
\[
a = \arg\max \left\{ \widehat{f}_{h,i} + C t^{\frac{b_{d}}{\beta_{d}}} T_{h,i}^{-\frac{\alpha_{d}}{\beta_{d}}} + \nu \rho^{h}\right\},
\]
$b_{d}, \alpha_{d}, \beta_{d}$ are constant. However, \citet{mao2020poly} can only provide convergence proof for deterministic settings leaving a \textbf{gap} in stochastic environments. 
We build upon this idea in our \Algname algorithm to close this gap.

\section{Stochastic Power-HOOT}
\SetKwProg{Fn}{}{}{}
\begin{algorithm*}[ht!]
\small
  \caption{\Algname with   {$\gamma$ is a discount factor.}
  {$n:$ is the number of rollouts.}
  {$\{b_{d}\}^D_{d=0}$, $\{\alpha_{d}\}^D_{d=0}$, $\{\beta_{d}\}^D_{d=0}$ are positive algorithmic constants that satisfy conditions as in Table~\ref{algorithmic_constants}.} {$\pi_0$ is a rollout policy. $C$ is an exploration constant.}}
  \textbf{Input: } 
  \text{root node state $s_{0}$}\\
  \textbf{Output:}
  \text{optimal action at the root node}
  \label{stochastic_poly_hoot}
  \begin{multicols}{2}
  \SetKwFunction{SelectAction}{SelectAction}
  \SetKwFunction{Search}{Search}
  \SetKwFunction{Expand}{Expand}
  \SetKwFunction{Rollout}{Rollout}
  \SetKwFunction{HOOUpdateV}{HOOUpdateV}
  \SetKwFunction{MainLoop}{MainLoop}
  \BlankLine
  \Fn{R = \Rollout{$s$}}{
    $\widetilde{V}(s) = \text{average of the call to } \pi_{0}(s)$\\
    \Return $\widetilde{V}(s)$
  }
  \BlankLine
  \Fn{a = \SelectAction{$s$, $depth=d$, $t$}}{
    \uIf{\text{state} s \text{has never been visited at depth } d }{
        {\text{Initialize HOO agent at state $s$ and depth $d$:} \\
        {$\mathcal{T} \leftarrow \{(0,1)\}$\\
        $B_{1,2},B_{2,2}\leftarrow +\infty$}}
    } \Else {
        {$\mathcal{T} \leftarrow$ \text{the HOO agent constructed at state s and depth d previously}}
    }
    $(h,i) \leftarrow (0,1)$\\
    $\text{//Initialize HOO path in the current round:}\\ 
    P_t \leftarrow \{(h,i)\} $\\
    \While{$(h,i) \in \mathcal{T}$} {
       \uIf{$B_{h+1,2i-1} > B_{h+1,2i}$} {
            $(h,i) \leftarrow (h+1,2i-1)$
       } \Else{
            $(h,i) \leftarrow (h+1,2i)$
       } 
       $P_t \leftarrow P_t \cup (h,i)$
    }
    $(H,I) \leftarrow (h,i)$\\
    \If{$H \leq \widehat{H}$} {
        \text{Choose arbitrary arm $X$ in $\mathcal{P}_{H,I}$}\\
        $A_{H,I} = X$\\
        $\mathcal{T} \leftarrow \mathcal{T} \cup \{(H,I)\}$\\
        $B_{H+1,2I-1},B_{H+1,2I} \leftarrow +\infty$\\
        \Return $X$
    }
    $(H,I) \leftarrow (H-1, \lceil I/2\rceil)$\\
    \Return $A_{H,I}$
  }
  \Fn{\HOOUpdateV{$d,s,t,Y$}}{
    \For{$(h,i) \in P_t$}{
        $T_{h,i} \leftarrow T_{h,i} + 1$\\
        $\widehat{f}_{h,i} \leftarrow (1 - 1/T_{h,i})\widehat{f}_{h,i} + Y/T_{h,i}$
    }
    \For{$(h,i) \in \mathcal{T}$}{
        $U_{h,i} = \widehat{f}_{h,i} + Ct^{\frac{b_{d+1}}{\beta_{d+1}}} T_{h,i}^{-\frac{\alpha_{d+1}}{\beta_{d+1}}} + \nu_1 \rho^{d+1}$
    }
    \BlankLine
    $\mathcal{T}' \leftarrow \mathcal{T}$\\
    \While{$\mathcal{T}' \neq \{(0,1)\}$} {
        $(h,i) \leftarrow \text{an arbitrary leaf node of } \mathcal{T}'$\\
        $B_{h,i} = \min \Set{ U_{h,i},\max \Set{B_{h+1,2i}, B_{h+1,2i-1}} }$ \\
        $\mathcal{T}' \leftarrow \mathcal{T}' \setminus \{(h,i)\}$
    }
  }
  \BlankLine
  \Fn{\MainLoop}{
    $t=0$\\
    \For{\text{simulation round $ t \leftarrow 1 \text{ to } n$}}{
        \For{\text{depth $ d \leftarrow 0 \text{ to } D-1$}}{
            {$a_d \gets ${\SelectAction}$(s_{d}, depth=$d$,t)$}\\
            {$s_{d+1}$}{$ \sim \mathcal{P}(\cdot|s_{d},a_d)$}\\
            {$r(s_{d},a_d)$}{$ \sim \mathcal{R}(s_{d},a_d,s_{d+1})$}
        }
        {$r(s_{D})$}{$ \sim \widetilde{V}(s_{D})$}\\
        \For{\text{depth $ d \leftarrow D-1 \text{ to } 0$}}{
            {$T_{s_{d},a_d}(t) \gets T_{s_{d},a_d}(t) + 1$}\\
            $\widehat{Q}_{T_{s_{d},a_d}(t)}(s_{d},a_d) \gets\\
            (1-1/T_{s_{d},a_d}(t))\widehat{Q}_{T_{s_{d},a_d}(t)}(s_{d},a_d)\\ + (r(s_{d},a_d) + \gamma \widehat{V}_{T_{s_{d+1}}(t)}(s_{d+1}))/T_{s_{d},a_d}(t)$\\
            $T_{s_{d}}(t) \gets T_{s_{d}}(t) + 1$ \\
            $\widehat{V}_{T_{s_d}(t)}(s_{d}) \gets \left( \underset{a}{\sum} \frac{T_{s_{d},a}(t)}{T_{s_{d}}(t)} (\widehat{Q}_{T_{s_{d},a}(t)})^{p}(s_{d},a) \right)^{\frac{1}{p}}$\\
            \HOOUpdateV{$d,s_d,t,\widehat{V}_{T_{s_d}(t)}(s_{d})$}
        }
    }
    \Return $\widehat{V}_n(s_0)$
  }
  \BlankLine
  \end{multicols}
\end{algorithm*}
In this section, we first present a generic UCT-like algorithm and then we present our \Algname algorithm.
\subsection{Generic UCT-like Algorithm}
For node $s_d$ at depth $d$, we define value estimates $\widehat{V}_{T_{s_d}(t)}(s_d)$ and Q-value estimates $\widehat{Q}_{T_{s_d,a}(t)}(s_d, a)$ with visit counts $T_{s_d}(t)$ and $T_{s_d,a}(t)$.

A UCT-like algorithm follows predefined bonus functions $B(t, s_d, a)$ for each state, action $(s_d,a)$ at time step $t$. It iteratively collects trajectories from root $s_0$ until reaching a leaf or depth $D$. Upon reaching a leaf, playout policy $\pi_0$ provides value estimate $V_0$-returning i.i.d. samples if stochastic or fixed values if deterministic.

Trajectories $\{s_0,a_0,r_0,\ldots,s_{\ell_t},\widetilde{V}(s_{\ell_t})\}$ are collected using action selection:
\[
a_d = \argmax_{a \in \mathcal{A}_{s_d}} \left\{\widehat{Q}_{T_{s_d,a}(t)}(s_d,a) + B(t, s_d, a)\right\}
\]
where $s_{d+1} \sim \mathcal{P}(\cdot | s_d, a_d)$. After $t$ simulations, the estimated best action is $\widehat{a}_t = \argmax_{a} \widehat{Q}_{T_{s_0,a}(t)}(s_0,a)$.

\subsection{The \texorpdfstring{\Algname} ~Algorithm}

\Algname is an MCTS-based algorithm for continuous-action, stochastic environments. It introduces a power mean backup operator for value estimation and a polynomial exploration bonus, enabling efficient planning under stochastic dynamics.

\subsubsection{Algorithm Overview}
\Algname (pseudocode shown in Alg.~\ref{stochastic_poly_hoot}) iteratively builds a search tree using four main phases: \textit{Selection}, \textit{Expansion}, \textit{Rollout}, and \textit{Backpropagation}. At each step, it leverages a power mean operator to compute value estimates and a polynomial exploration term to balance exploration and exploitation.

\subsubsection{Selection and Expansion}
During the \textit{selection} phase, \Algname navigates the search tree from the root to a leaf node. At each depth \( d \), it selects an action by traversing the Hierarchical Optimistic Optimization (HOO) tree, choosing child nodes based on their upper confidence bounds. If a state has not been visited before at a given depth, a new HOO agent is initialized.

In the \textit{expansion} phase, new child nodes are added to the tree based on the selected action. If the depth limit has not been reached, an arbitrary action is sampled within the corresponding partition \(\mathcal{P}_{H,I}\), and a new node is created.

\subsubsection{Rollout and Backpropagation}

In the \textit{rollout phase}, a default policy \(\pi_0\) is used to simulate trajectories from the expanded node to estimate the value of terminal states. The resulting reward is then propagated back up the tree.

The \textit{backpropagation phase} updates the visit counts and value estimates of nodes along the traversed path. For each node \((h,i)\), the empirical mean reward \(\widehat{f}_{h,i}\) is updated, and the upper confidence bounds \( U_{h,i} \) are adjusted using a polynomial exploration bonus. The backpropagation step ensures that future searches prioritize promising regions of the action space.

\subsubsection{Value Backup and Exploration Strategy}

Unlike standard MCTS approaches that rely on simple averaging, \Algname employs a power mean backup operator to aggregate value estimates:
\begin{equation}
\widehat{V}_{T(s_d)}(s_d) = \left( \sum_{a} \frac{T_{s_d,a}}{T_{s_d}} \left(\widehat{Q}_{T_{s_d,a}}(s_d, a)\right)^p \right)^{\frac{1}{p}}.
\end{equation}
This formulation provides a tunable balance between mean and max-based value updates, allowing for better adaptation to stochastic rewards.

Additionally, \Algname incorporates a polynomial exploration bonus:
\begin{equation}
U_{h,i} = \widehat{f}_{h,i} + C t^{\frac{b_{d+1}}{\beta_{d+1}}} T_{h,i}^{-\frac{\alpha_{d+1}}{\beta_{d+1}}} + \nu_1 \rho^{d+1}.\nonumber
\end{equation}
This approach ensures a more stable and theoretically grounded exploration mechanism compared to traditional logarithmic bonuses.

\begin{table}[htb]
\centering
\caption{Summary of constraints on the algorithmic constants 
         for \(d \in \{0,1,\dots,D\}.\) ($d'$ is a constant that is bigger than $d$)} 
\label{algorithmic_constants}
\renewcommand{\arraystretch}{1.1}  % Slightly more row spacing
\begin{tabular}{p{0.95\columnwidth}}
\toprule
\textbf{Conditions on the parameters} $b_d, \alpha_d, \beta_d, p$ \textbf{(all must hold):}\\
\midrule

1. \quad $b_{d} < \alpha_{d}$ \quad and \quad $b_{d} > 2$. \\[0.4em]

2. \quad \textbf{Either} 
     \(\bigl(1 \le p \le 2 \;\;\text{and}\;\; \alpha_{d} \le \tfrac{\beta_{d}}{2}\bigr)\) 
     \textbf{or}
     \(\bigl(p > 2,\; \alpha_{d} \le \tfrac{\beta_{d}}{2},\;\text{and}\; 
                0 < \alpha_{d} - \tfrac{\beta_{d}}{p} < 1\bigr)\). \\[0.4em]

3. \quad \(\alpha_{d}\Bigl(1 - \tfrac{b_{d}}{\alpha_{d}}\Bigr) \;\le\; b_{d} \;<\; \alpha_{d}.\) \\[0.4em]

4. \quad \(\displaystyle \alpha_{d} 
         \;=\; 
         \frac{\,1 - \tfrac{b_{d+1}}{\alpha_{d+1}}\,}
              {\,1 + d^\prime + \tfrac{\beta_{d+1}}{\alpha_{d+1}}\,}
         \;\times\;
         \frac{\,b_{d+1} - 3\,}{2}.\) \\[0.4em]

5. \quad \(\beta_{d} \;=\; \bigl(b_{d+1} - 1\bigr).\)\\

\bottomrule
\end{tabular}
\end{table}
Our theoretical framework requires bounded planning horizon $D$. For analysis purposes, we consider a modified algorithm that generates fixed-length trajectories of depth $D$, applying the playout policy exclusively at terminal leaf nodes. Practical implementation could use infinite depth.

\section{Theoretical analysis}
Planning in MCTS requires a sequence of decisions along the tree, with each internal node acting as a non-stationary bandit. The empirical mean at these nodes shifts due to the action selection strategy. To address this problem, we first analyze non-stationary multi-armed bandit settings, focusing on the concentration properties of the power-mean backup for each arm compared to the optimal value. We then apply these findings to MCTS.

\subsection{Theoretical Analysis Framework}

We start by defining essential notation and assumptions for our theoretical analysis. Consider a Hierarchical Optimistic Optimization (HOO) agent, where $X \subseteq A \subseteq [0,1]^m$ represents the continuous action (or arm) space in the current state. Each action $x \in X$ has an associated stochastic reward distribution, representing the "cost-to-go" or $Q$-value at the current state in the MDP. We define $f_t(x): X \rightarrow \mathbb{R}$ as the expected reward at time $t$, termed the temporary mean-payoff function. Since the rewards are impacted by future choices deeper in the MCTS tree, $f_t$ is non-stationary. However, we assume $f_t$ converges to a limiting function $f$ in $L^{\infty}$ at a polynomial rate: $\|f_t - f\|_{\infty} \leq \frac{C}{t^{\zeta}}$ for some constant $C > 0$ and $\zeta \in (0, 0.5)$. This limiting function, denoted as $f$, is the mean-payoff function and is formalized in Theorem 2.

Given that MDP rewards are bounded by $R_{\max}$, the bandit payoff at any tree node at depth $d$ and the mean-payoff function $f$ are also bounded by $\frac{R_{\max}}{1 - \gamma}$. Let $f^{*} = \sup_{x \in X} f(x)$ represent the highest achievable payoff by any HOO agent, and let $X_t$ be the action chosen by the agent at time $t$. The agent aims to minimize the cumulative regret over the first $n$ rounds, defined as $R_n \triangleq n f^{*} - \sum_{t=1}^{n} Y_t$, where $Y_t$ is the observed reward of choosing $X_t$ at time $t$, with $\mathbb{E}[Y_t] = f_t(X_t)$.

Our analysis is based on two main assumptions, adapted from ~\citep{bubeck2011x}, which support the hierarchical structure of MCTS with coverings and the smoothness of the reward function.

\paragraph{Assumption 1.} For each HOO agent, with parameters $\nu_1$ and $\rho \in (0, 1)$, and a covering tree $(\mathcal{P}_{h, i})$, we assume there exists a dissimilarity function $\ell: X \times X \rightarrow [0, \infty]$ satisfying:
\begin{itemize}
    \item[(a)] There exists a constant $\nu_2 > 0$ such that for each integer $h \geq 0$, the diameter of any partition $\mathcal{P}_{h, i}$ shrinks geometrically as $\operatorname{diam}(\mathcal{P}_{h, i}) \leq \nu_1 \rho^h$ for all $i \in \{1, \dots, 2^h\}$, where $\operatorname{diam}(A) \triangleq \sup_{x, y \in A} \ell(x, y)$.
    \item[(b)] Each partition $\mathcal{P}_{h, i}$ contains a point $x_{h, i}^{\circ}$ with a smaller covering region $\mathcal{B}_{h, i} \triangleq \mathcal{B}(x_{h, i}^{\circ}, \nu_2 \rho^h) \subset \mathcal{P}_{h, i}$, where $\mathcal{B}(x, \varepsilon) = \{y \in X : \ell(x, y) < \varepsilon\}$.
    \item[(c)] These smaller covering regions do not overlap: $\mathcal{B}_{h, i} \cap \mathcal{B}_{h, j} = \emptyset$ for all $1 \leq i < j \leq 2^h$.
\end{itemize}

% \paragraph{Remark 1.} Assumption 1 implies that each partition $\mathcal{P}_{h, i}$ narrows at a geometric rate as depth $h$ increases, a property satisfied in compact Euclidean spaces. For instance, if $X = [0,1]^2$, partitions can be created by halving each hyperrectangle along its longest side, yielding new partitions $\mathcal{P}_{h+1, 2i-1}$ and $\mathcal{P}_{h+1, 2i}$. This setup holds with $\ell$ as the Euclidean norm and parameters $\rho = \frac{1}{2}$, $\nu_1 = 8$, and $\nu_2 = \frac{1}{4}$. Such general form provides flexibility in selecting $\ell$.

\paragraph{Assumption 2 (Smoothness).} The limiting mean-payoff function satisfies:
\[
f^{*} - f(y) \leq f^{*} - f(x) + \max\{f^{*} - f(x), \ell(x, y)\}, \forall x, y \in X.
\]

% \paragraph{Remark 2.} Assumption 2 imposes a smoothness condition on $f$ that is less restrictive than standard Lipschitz continuity, which would require $|f(x) - f(y)| \leq \ell(x, y)$ for all $x, y \in X$. Instead, it requires smoothness only in the neighborhood of the optimal arm $x^{*}$, with looser requirements for other points in $X$. In MDPs, this implies that after a specified number of value-iteration steps from an initial value $\hat{V}$, the $Q(s, a)$ function is Lipschitz continuous in action $a$. This assumption holds in a broad range of settings, including Lipschitz MDPs (Asadi et al., 2018).

\subsection{Non-stationary Power Mean HOO}
\label{s:bandit_definition}

We begin by developing our theoretical framework through the lens of non-stationary multi-armed bandits, which will serve as the foundation for analyzing \Algname~in the full MCTS setting. Each node in our search tree can be viewed as a bandit problem where arms correspond to possible actions (HOO tree paths) and rewards evolve non-stationarily due to changing value estimates from deeper levels.

\paragraph{Problem Formulation}
Consider a continuous-armed bandit over domain $X \subseteq [0,1]^m$ with rewards bounded in $[0, R_{\max}]$. Unlike classical bandits, our reward process is \emph{non-stationary}-the expected reward $f_t(x)$ for arm $x$ at time $t$ evolves as the algorithm progresses. This non-stationarity captures the changing value estimates that occur in MCTS as the tree expands and value propagation refines estimates.

Our bandit satisfies two key structural properties:

\begin{enumerate}
    \item \textbf{Fixed-arm convergence:} The mean-payoff function converges polynomially:
    \begin{equation}
        \left\|f_n - f\right\|_{\infty} \leq \frac{C}{n^{\zeta}}, \quad \forall n \geq 1, \tag{4}
    \end{equation}
    for constants $C > 0$ and $0 < \zeta < \frac{1}{2}$.
    
    \item \textbf{Fixed-arm concentration:} Individual arm estimates concentrate at polynomial rates:
    \begin{equation}
        \mathbb{P}\left(\left|\frac{1}{n}\sum_{t=1}^{n} Y_t - f(x)\right| \geq \epsilon\right) \leq C n^{-\alpha}\epsilon^{-\beta}, \quad \forall x \in X, \tag{5}
    \end{equation}
    for appropriate constants $C, \alpha, \beta > 0$.
\end{enumerate}

\paragraph{HOO Tree Structure and Action Space}
In the continuous setting, we cannot enumerate all possible actions. Instead, we use the Hierarchical Optimistic Optimization (HOO) framework to adaptively partition the action space $X$ into a binary tree of depth $\bar{H}$. Each root-to-leaf path in this tree represents a distinct "pseudo-action" corresponding to a refined region of $X$.

Let $K$ denote the finite set of all possible paths from root to leaf in the depth-$\bar{H}$ HOO tree. For each path $a \in K$, we maintain:
- Visit count: $T_a(n) = \sum_{t=1}^{n-1}\mathds{1}(a_t = a)$  
- Empirical reward: $\widehat{f}_{a,T_a(n)} = \frac{1}{T_a(n)} \sum_{s=1}^{T_a(n)} Y_{a,s}$

where $Y_{a,s}$ is the $s$-th reward obtained from path $a$.

\paragraph{Non-stationary Power Mean HOO Algorithm}
Our algorithm extends classical HOO to handle non-stationary rewards through polynomial exploration bonuses and power mean aggregation. The complete algorithmic description is provided in Algorithm~\ref{alg:power_mean_hoo} in the appendix. The key components include: (1) hierarchical action space partitioning via HOO trees, (2) polynomial upper confidence bounds of the form $C t^{b/\beta} T_{h,i}^{-\alpha/\beta}$ for robust exploration in non-stationary settings, and (3) power mean aggregation $\widehat{f}_n(p) = \left(\sum_{a=1}^{K} \frac{T_a(n)}{n}\widehat{f}_{a,T_a(n)}^{p}\right)^{1/p}$ to flexibly balance exploration and exploitation through the tunable parameter $p$.

\paragraph{Key Algorithmic Components}

\textbf{Polynomial Exploration Bonus:} Unlike classical UCB algorithms that use $\sqrt{\log n / T_a(n)}$ bonuses, we employ the polynomial term $C n^{b/\beta} / T_a(n)^{\alpha/\beta}$. This provides stronger exploration guarantees in non-stationary settings where confidence intervals must adapt to changing reward distributions.

\textbf{Power Mean Aggregation:} The power mean operator $\widehat{f}_n(p)$ provides a flexible interpolation between conservative (arithmetic mean, $p=1$) and optimistic (maximum, $p \to \infty$) value estimates. This tunability is crucial for balancing exploration and exploitation across different levels of environmental stochasticity.

\textbf{HOO Tree Refinement:} The hierarchical structure allows the algorithm to focus computational effort on promising regions of the continuous action space while maintaining theoretical coverage guarantees.

\paragraph{Theoretical Foundation}
The convergence analysis for this algorithm builds on the following result from \citet{mao2020poly}:

\begin{theorem}[Enhanced HOO for Non-stationary Bandits \citep{mao2020poly}]\label{thm:enhanced_hoo}
For a continuous-armed bandit satisfying properties (4) and (5), with parameters fulfilling $\alpha(1-b/\alpha) \leq b < \alpha$, $b > 3$, and $\rho^{\bar{H}} < n^{-\alpha/\beta}$, the enhanced HOO agent achieves:

\begin{enumerate}
    \item \textbf{Optimal-arm convergence:} $\left| \frac{1}{n} \mathbb{E}[\sum_{t=1}^{n} Y_t] - f^* \right| \leq \frac{C_0}{n^{\zeta}}$
    \item \textbf{Optimal-arm concentration:} $\mathbb{P}(|\frac{1}{n}\sum_{t=1}^{n} Y_t - f^*| \geq \epsilon) \leq C'n^{-\alpha'}\epsilon^{-\beta'}$
\end{enumerate}

where $\alpha^{\prime}=\frac{1-b/\alpha}{1+d^\prime + \beta/\alpha}\frac{b-3}{2}$, $\beta^{\prime}=\frac{b-3}{2}$ and $Y_t$ is the random reward observed at time  $t$.
\end{theorem}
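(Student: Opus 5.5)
The argument follows the structure of \citet{mao2020poly}; the main work is in the regret decomposition. I treat the HOO agent of depth $\bar H$ as a finite bandit over the set $K$ of root-to-leaf paths. Every path $a$ has a representative region $\mathcal{P}_{\bar H,i}$ with diameter at most $\nu_1\rho^{\bar H}<\nu_1 n^{-\alpha/\beta}$ by Assumption~1(a). By Assumption~2, the limiting payoff on the region optimal at depth $\bar H$ is therefore within $O(n^{-\alpha/\beta})$ of $f^*$, and since $\alpha/\beta\ge \zeta$ this discretization error is absorbed into both target rates.

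The first step is the standard HOO regret decomposition. Following \citet{bubeck2011x}, I split the tree nodes into three groups. The first group is the $\eta$-optimal nodes at depths $h\le H$. Their count is controlled by the near-optimality dimension $d'$: there are at most $c\,\rho^{-hd'}$ of them at depth $h$. The second group is the remaining nodes whose parents are $\eta$-optimal. The third group is everything else, namely descendants of suboptimal nodes.

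For any suboptimal node $(h,i)$ I then bound $\mathbb{E}[T_{h,i}(n)]$ by the usual argument. The node can be selected only if its $B$-value beats that of some optimal ancestor path. That event requires one of two things. Either the empirical mean of an optimal node underestimates $f^*$ by more than the bonus, or the suboptimal node has been visited fewer than $u_{h,i}\approx (n^{b/\beta}/\Delta_{h,i})^{\beta/\alpha}$ times. The tail probabilities come from the fixed-arm concentration (5), the drift of $f_t$ is handled by (4), and a union bound over rounds with the polynomial bonus $Cn^{b/\beta}T^{-\alpha/\beta}$ makes each failure event have probability $O(n^{-(b-1)})$ up to constants. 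This is where the conditions $b>3$ and $\alpha(1-b/\alpha)\le b<\alpha$ enter. Summing over the three groups and optimizing the threshold depth $H$ gives
\[
\mathbb{E}[R_n]\;\lesssim\; n^{1-\frac{1-b/\alpha}{1+d'+\beta/\alpha}\cdot(\text{const})}+n^{1-\zeta}.
\]
Writing $\mathbb{E}[Y_t]=f_t(X_t)$ and using $\sum_t\|f_t-f\|_\infty=O(n^{1-\zeta})$, this yields part~1 after dividing by $n$, with the rate dominated by $n^{-\zeta}$.

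For part~2, the concentration statement, I decompose
\[
\frac1n\sum_t Y_t-f^* \;=\;\frac1n\sum_t\bigl(Y_t-f_{t}(X_t)\bigr)\;+\;\frac1n\sum_t\bigl(f_t(X_t)-f^*\bigr).
\]
The first term is a bounded martingale average, so a Markov inequality on a high moment gives a polynomial tail bound. The second term splits into the drift, which is deterministically $O(n^{-\zeta})$, and the number of pulls of suboptimal regions weighted by their gaps. I control the second piece with a high-probability version of the visit-count bound: $\mathbb{P}(T_{h,i}(n)>u_{h,i}+m)$ decays polynomially in $m$, again by (5) with a union bound. Then Markov's inequality, together with the moment choice matched to $(b-3)/2$, gives a tail of the form $C'n^{-\alpha'}\epsilon^{-\beta'}$ with $\beta'=(b-3)/2$ and $\alpha'=\frac{1-b/\alpha}{1+d'+\beta/\alpha}\beta'$.

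I expect the main obstacle to be this high-probability version of the suboptimal visit counts for non-stationary rewards, together with the bookkeeping of exponents so that they come out exactly as $\alpha'$ and $\beta'$. The most delicate part is balancing the depth cut-off $H$ against the near-optimality covering number, because that is where $1+d'+\beta/\alpha$ arises. My first attempt would be to copy the exponent bookkeeping of the polynomial-UCT concentration lemma of \citet{shah2020non}, applied node-by-node along the HOO tree.
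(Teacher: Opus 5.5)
The paper gives no proof of this statement. It is quoted from \citet{mao2020poly} (their Theorem~2), and the appendix only reproduces the visit-count ingredients (Lemmas~4--10). Your outline of part~1 follows the architecture of that source, and is fine at the level of an outline. It uses the three-group decomposition of \citet{bubeck2011x}. It bounds visits per node, $\mathbb{E}[T_{h,i}(n)]\lesssim\bigl(n^{b/\beta}/(\Delta_{h,i}-\nu_1\rho^h)\bigr)^{\beta/\alpha}+O(1)$, from (5) via union bounds, with $b>3$ making $\sum_t t^{2-b}$ finite. It then balances the cut-off depth $H$ against $|I_h|\lesssim\rho^{-hd'}$.

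\textbf{The gap: the martingale step.} The problem is your treatment of the noise term in part~2, and for the same reason the identity $\mathbb{E}[Y_t]=f_t(X_t)$ that you use in part~1.

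You call $\frac1n\sum_t\bigl(Y_t-f_t(X_t)\bigr)$ a bounded martingale average. Hypotheses (4)--(5) give you nothing like $\mathbb{E}[Y_t\mid\mathcal{F}_{t-1}]=f_t(X_t)$. In this non-stationary model each arm's rewards form an arbitrary dependent sequence. In the MCTS application, the $s$-th reward of an action is the current backed-up value of a child subtree, which depends on that subtree's entire history. Moreover, $X_t$ is selected as a function of exactly those past rewards. So neither martingale moment bounds nor Wald-type identities are available. That is precisely why the bandit is axiomatized through the fixed-arm concentration (5) rather than through a noise model.

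The step has to be replaced as follows. Regroup $\sum_t\bigl(Y_t-f(X_t)\bigr)=\sum_x\sum_{s\le T_x(n)}\bigl(Y_{x,s}-f(x)\bigr)$. Then control each inner sum by (5), with a union bound over the possible values of $T_x(n)$ and over the arms involved. This is the $X^j$, $t_j$, $L$ bookkeeping in the proof of Lemma~6(i). Combined with a polynomial visit-count tail of the type $\mathbb{P}(T_{h,i}(n)>u)\lesssim n^{2-b}+u^{3-b}$ (Lemma~9 here), this kind of argument is what must produce the polynomial tails and the exponents $\alpha',\beta'$. Your ``moment choice matched to $(b-3)/2$'' is a placeholder for that derivation, not a substitute for it.

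\textbf{A second, related problem: splitting off the drift.} Separating the drift through (4) loses something that matters. It leaves a deterministic error of order $n^{-\zeta}$, where $\zeta$ is the exponent in (4). Your argument therefore gives no control for $\epsilon\lesssim n^{-\zeta}$. The claimed bound $C'n^{-\alpha'}\epsilon^{-\beta'}$, however, is nontrivial down to $\epsilon\approx n^{-\alpha'/\beta'}$. Your route recovers the stated exponents only if $\zeta\ge\alpha'/\beta'=\frac{1-b/\alpha}{1+d'+\beta/\alpha}$, and (4) does not guarantee this. Since (5) is already centred at the limit $f(x)$, you should not separate the drift at all. Apply (5) to $Y_{x,s}-f(x)$ directly.
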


\paragraph{Concentration Rate Formalization}
To facilitate our analysis, we formalize the notion of polynomial concentration:

\begin{definition}[Polynomial Concentration]\label{def:concentration_main}
A sequence of estimators $(\widehat{V}_n)_{n\geq 1}$ \emph{concentrates at rate} $(\alpha,\beta)$ toward limit $V$ if there exists constant $c>0$ such that:
\[
\forall n\geq 1, \forall \varepsilon > n^{-\alpha/\beta}: \quad \mathbb{P}(|\widehat{V}_n - V| > \varepsilon) \leq c n^{-\alpha}\varepsilon^{-\beta}
\]
We denote this as $\widehat{V}_n \cv{\alpha,\beta} V$.
\end{definition}

\paragraph{Assumption 3 (Stochastic Reward Concentration)}\label{assumpt_3_main}
For each action $x \in X$, the empirical mean $\widehat{f}_{n} = \frac{1}{n}\sum_{t=1}^{n}Y_t$ satisfies $\widehat{f}_{n}\cv{\alpha,\beta} f(x)$ for the true mean $f(x)$, where $Y_t$ is the $t$-reward follows the "psudo-action" path $x$ from HOO.

\paragraph{Power Mean Concentration Result}
Our main theoretical contribution for the bandit setting establishes that power mean aggregation preserves polynomial concentration rates:

\begin{manualtheorem}{2}[Power Mean Concentration]\label{thm:theorem2_main}
For estimators $\widehat{f}_{a,n}\cv{\alpha,\beta} f^{*}_{h,i}$ with $f^{\star} = \max_{a} \{ f^{*}_{h,i}\}$, if parameters satisfy $(1\leq p \leq 2, \alpha \leq \beta/2)$ or $(p > 2, 0 < \alpha - \beta/p < 1)$ and $\alpha(1 -b/\alpha) \leq b < \alpha$, then:
\[\widehat{f}_n(p) \cv{\alpha',\beta'} f^{\star}\]
where $\alpha^{\prime}=\frac{1-b/\alpha}{1+d^\prime + \beta/\alpha}\frac{b-3}{2}$, $\beta^{\prime}=\frac{b-3}{2}$.
\end{manualtheorem}

This bandit-level result provides the foundation for our MCTS analysis, where each tree node corresponds to a non-stationary bandit with arms given by child nodes and rewards determined by downstream value estimates.

\subsection{Monte Carlo Tree Search Analysis}

Having established the concentration properties of power mean estimation in non-stationary bandits, we now extend these results to the full MCTS setting. The key insight is that each internal node in our search tree can be modeled as a non-stationary multi-armed bandit where arms correspond to child nodes and rewards are given by (evolving) value estimates from deeper levels.

\paragraph{From Bandits to Trees: The Inductive Framework}

Our analysis proceeds inductively through the tree structure. At the leaves (depth $D$), rollout policies provide direct value estimates with known concentration properties. Moving upward, each internal node aggregates child values using our power mean operator while selecting actions via polynomial exploration bonuses. The challenge is showing that concentration rates propagate correctly through this hierarchical structure.

\paragraph{Q-Value Concentration in Stochastic MDPs}

The foundation of our MCTS analysis rests on understanding how Q-value estimates behave when transition dynamics are stochastic. The following result, established in \citet{pmlr-v244-dam24a}, provides the crucial bridge between individual value estimates and their aggregated Q-values:

\begin{lemma}[Q-Value Concentration, Lemma 1 of \citet{pmlr-v244-dam24a}]\label{lem:concQ_main} 
Consider a state-action pair with $M$ possible next states. Let $(\widehat{V}_{m,n})_{n\geq 1}$ be value estimators for next-state $m$ satisfying $\widehat{V}_{m,n}\cv{\alpha,\beta} V_m$, bounded by constant $L$. Let $X_i$ be i.i.d. immediate rewards with mean $\mu$, and $S_i$ be i.i.d. next-state indices from distribution $p=(p_1,\dots,p_M)$.

Define visit counts $N_m^{n} = |\{ i \leq n : S_i = m\}|$ and the Q-value estimator:
\[
\widehat{Q}_n = \frac{1}{n}\sum_{i=1}^{n} X_i + \gamma \sum_{m=1}^{M} \frac{N_m^{n}}{n}\widehat{V}_{m,N_m^{n}}
\]

Then, with $2\alpha \leq \beta$ and $\beta > 1$:
\[
\widehat{Q}_n \cv{\alpha,\beta} \mu + \gamma\sum_{m=1}^{M} p_m V_m
\]
\end{lemma}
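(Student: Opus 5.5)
We prove the claim $\widehat{Q}_n \cv{\alpha,\beta} \mu + \gamma\sum_m p_m V_m$ by splitting $\widehat{Q}_n - Q$ into three error sources and bounding each tail separately. Write $Q = \mu + \gamma\sum_m p_m V_m$. Then
\[
\widehat{Q}_n - Q = \Bigl(\tfrac{1}{n}\textstyle\sum_{i} X_i - \mu\Bigr) + \gamma\textstyle\sum_{m}\Bigl(\tfrac{N_m^n}{n} - p_m\Bigr)V_m + \gamma\textstyle\sum_m \tfrac{N_m^n}{n}\bigl(\widehat{V}_{m,N_m^n} - V_m\bigr).
\]
For $\varepsilon > n^{-\alpha/\beta}$ we use the union bound
\[
\mathbb{P}(|\widehat{Q}_n - Q|>\varepsilon) \le \mathbb{P}(|A_1|>\varepsilon/3)+\mathbb{P}(|A_2|>\varepsilon/3)+\textstyle\sum_m \mathbb{P}\bigl(\tfrac{N_m^n}{n}|\widehat{V}_{m,N_m^n}-V_m| > \varepsilon/(3\gamma M)\bigr),
\]
where $A_1$ and $A_2$ denote the first two terms of the decomposition. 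It then suffices to show that each term is $O(n^{-\alpha}\varepsilon^{-\beta})$.

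\textbf{Terms $A_1$ and $A_2$.} These are averages of bounded i.i.d. variables: the rewards for $A_1$, and the indicators $\mathds{1}(S_i=m)$ for $A_2$. Hoeffding's inequality gives the tail bound $2M\exp(-c n\varepsilon^2)$. We convert this to polynomial form using $e^{-x}\le C_k x^{-k}$ with $k=\beta/2$, which gives $C n^{-\beta/2}\varepsilon^{-\beta}$. Since $2\alpha\le\beta$, we have $n^{-\beta/2}\le n^{-\alpha}$, so these terms are of the required order.

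\textbf{The third term, which is the main obstacle.} Here $\widehat{V}_{m,N_m^n}$ is evaluated at a random count, so the concentration hypothesis cannot be applied directly. We condition on $N_m^n=k$. Because the next-state sequence is i.i.d. and independent of how the $m$-th value estimator evolves along its own visits, $\widehat{V}_{m,k}$ has the law given by its hypothesis. Set $\eta = \varepsilon/(3\gamma M)$. The event in question then requires $|\widehat{V}_{m,k}-V_m|>\eta n/k$. We split into two cases.

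\emph{Case $\eta n/k \le k^{-\alpha/\beta}$.} This happens only when $k$ is so small that $k^{1-\alpha/\beta} \lesssim \varepsilon^{-1}\cdots$, more precisely $k\ge (\eta n)^{\beta/(\beta+\alpha)}$ fails. In this case we use the trivial bound via $L$: the event needs $k/n\cdot 2L>\eta$, i.e.\ $k > \eta n/(2L)$. Combining this with the case condition forces $n$ to be small relative to $\varepsilon^{-\beta/\alpha}$, where the claimed bound $c\,n^{-\alpha}\varepsilon^{-\beta}\ge 1$ holds after adjusting $c$.

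\emph{Case $\eta n/k > k^{-\alpha/\beta}$.} Here the hypothesis applies and gives a bound of $c\,k^{-\alpha}(\eta n/k)^{-\beta} = c\,\eta^{-\beta} n^{-\beta} k^{\beta-\alpha}$. Because $\beta>1$ and $2\alpha\le\beta$, we have $\beta-\alpha\ge 0$, so this is maximized at $k=n$, giving $c\,\eta^{-\beta}n^{-\alpha}$. Averaging over the law of $N_m^n$ preserves this bound.

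The delicate bookkeeping in the small-$k$ regime is where the boundedness constant $L$ and the restriction $\varepsilon>n^{-\alpha/\beta}$ both enter. Summing the three contributions and absorbing the factors $3$, $\gamma$, and $M$ into $c$ yields the claimed rate $(\alpha,\beta)$.
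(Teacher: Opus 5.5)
The paper gives no proof of this lemma; it only imports it from \citet{pmlr-v244-dam24a}. So I assess your argument on its own terms. Three parts of it are fine: the three-way decomposition, the Hoeffding-to-polynomial conversion using $2\alpha\le\beta$, and the bound $c\,k^{-\alpha}(\eta n/k)^{-\beta}=c\,\eta^{-\beta}n^{-\beta}k^{\beta-\alpha}\le c\,\eta^{-\beta}n^{-\alpha}$.

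\textbf{The conditioning step needs an extra hypothesis.} You claim that, given $N_m^n=k$, $\widehat V_{m,k}$ still obeys its $(\alpha,\beta)$ bound. That requires the whole process $(\widehat V_{m,k})_{k\ge1}$ to be independent of $(S_i)_{i\le n}$. This is not a consequence of the $S_i$ being i.i.d.; it is an extra hypothesis, and the lemma is false without it. Here is a counterexample:
\begin{itemize}
\item Take $n_j=2^j$ and an independent event $F_j$ with $\mathbb{P}(F_j)=\min\{1,n_j^{1/2-\alpha}\}$.
\item For $k$ within $\sqrt{n_j}$ of $p_m n_j$, set $\widehat V_{m,k}=V_m+\delta$ on $\{N_m^{n_j}=k\}\cap F_j$, and $\widehat V_{m,k}=V_m$ otherwise.
\item Since $\mathbb{P}(N_m^{n_j}=k)\,\mathbb{P}(F_j)=O(n_j^{-1/2})\min\{1,n_j^{1/2-\alpha}\}=O(k^{-\alpha})$, every $\widehat V_{m,k}$ satisfies the hypothesis.
\item Yet $|\widehat Q_{n_j}-Q|\gtrsim\gamma p_m\delta$ with probability of order $\min\{1,n_j^{1/2-\alpha}\}\gg n_j^{-\alpha}$.
\end{itemize}
Replacing the conditioning by a union bound over $k$ does not rescue the rate; it costs a polynomial factor in $n$. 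So independence must be stated explicitly as an assumption, which is what the tree model is supposed to provide. Even in Algorithm~\ref{stochastic_poly_hoot} it is not automatic: the HOO bonus at $s_m$ uses the global round $t$, and $t$ depends on the parent's transitions.

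\textbf{Your case analysis is wrong as written.} The condition $\eta n/k\le k^{-\alpha/\beta}$ is equivalent to $k\ge(\eta n)^{\beta/(\beta-\alpha)}$. So it is a large-$k$ regime, not a small-$k$ one, and the exponent is $\beta/(\beta-\alpha)$, not $\beta/(\beta+\alpha)$. In that regime the $L$-bound $k>\eta n/(2L)$ does nothing.

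The correct observation is simpler. Since $1-\alpha/\beta>0$ and $k\le n$, you have $k^{1-\alpha/\beta}\le n^{1-\alpha/\beta}$. Hence the hypothesis applies to every $1\le k\le n$ as soon as $\eta>n^{-\alpha/\beta}$. Otherwise $\varepsilon\le 3\gamma M n^{-\alpha/\beta}$, so $n^{-\alpha}\varepsilon^{-\beta}\ge(3\gamma M)^{-\beta}$, and the claim is trivial after enlarging $c$. Equivalently, replacing $c$ by $\max\{c,1\}$ makes the hypothesis hold for all $\varepsilon>0$. Your conclusion survives, but not via $L$.

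Two smaller points. Hoeffding needs bounded rewards; the lemma's statement omits this, but the setup $R\in[0,R_{\max}]$ supplies it. Also, $\beta>1$ plays no role in your argument; only $\beta\ge2\alpha$ is used.
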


\textbf{Intuition:} This lemma captures the essence of Bellman backup in stochastic settings. The Q-value estimate combines immediate rewards (which concentrate at standard rates) with discounted future values weighted by transition probabilities. Crucially, the concentration rate $(\alpha,\beta)$ is preserved despite the stochastic transitions, enabling our inductive analysis.

\paragraph{Inductive Concentration Analysis}

Lemma~\ref{lem:concQ_main} enables a powerful inductive argument for tree-wide concentration:

\begin{enumerate}
\item \textbf{Base Case (Leaves):} At depth $D$, rollout policies provide value estimates $\widehat{V}(s_D)$ with known concentration rates.

\item \textbf{Inductive Step:} At depth $d < D$, assume child values $\widehat{V}(s_{d+1})$ concentrate at rate $(\alpha_{d+1}, \beta_{d+1})$. Then:
   \begin{itemize}
   \item By Lemma~\ref{lem:concQ_main}, Q-values $\widehat{Q}(s_d, a)$ concentrate at the same rate $(\alpha_{d+1}, \beta_{d+1})$
   \item By Theorem~\ref{thm:theorem2_main}, the power mean aggregation $\widehat{V}(s_d)$ concentrates at the transformed rate $(\alpha_d, \beta_d)$
   \end{itemize}

\item \textbf{Propagation:} This process continues upward, with each level's concentration rates determined by the power mean transformation of the level below.
\end{enumerate}

\paragraph{Attribution and Technical Innovation}

While our overall approach follows the inductive framework established in prior work \citep{shah2022journal}, two key technical innovations are required for the stochastic continuous setting:

\begin{enumerate}
\item \textbf{Stochastic Q-Value Analysis:} Lemma~\ref{lem:concQ_main} from \citet{pmlr-v244-dam24a} extends concentration analysis to stochastic MDPs, handling the additional variance introduced by random transitions.

\item \textbf{Power Mean Backup:} Theorem~\ref{thm:theorem2_main} establishes concentration properties specific to power mean aggregation, enabling flexible exploration-exploitation tuning beyond standard arithmetic means.
\end{enumerate}

\paragraph{Main MCTS Convergence Result}

The inductive framework culminates in our main convergence guarantee:

\begin{manualtheorem}{4}[Convergence of Expected Payoff]\label{thm:theorem4_main}
For \Algname~applied to stochastic continuous MDPs, with optimal parameter tuning, the expected value estimate at the root satisfies:
\[
\left| \mathbb{E}[\widehat{V}_{n}(s_{0})] - \widetilde{V}(s_{0}) \right| \leq \mathcal{O}(n^{-\zeta})
\]
where $\zeta \in (0,1/2)$ depends on the algorithm parameters and tree depth.
\end{manualtheorem}

\begin{proof}[Proof Sketch]
The proof follows by applying Jensen's inequality and integrating the tail probability bounds established through our inductive concentration analysis. Complete details are provided in Appendix~F.
\end{proof}

\begin{remark}[Comparison with POLY-HOOT]
Theorem~\ref{thm:theorem4_main} achieves the same polynomial convergence rate $\mathcal{O}(n^{-\zeta})$ as POLY-HOOT \citep{mao2020poly}, but under significantly more general conditions. Key distinctions include:

\begin{itemize}
\item \textbf{Stochastic vs. Deterministic:} Our result handles stochastic MDPs while POLY-HOOT assumes deterministic dynamics
\item \textbf{Power Mean vs. Arithmetic Mean:} We provide guarantees for flexible power mean backups ($p \geq 1$) rather than fixed arithmetic means ($p = 1$)
\item \textbf{Continuous Actions:} Both methods handle continuous action spaces, but our stochastic extension is novel
\end{itemize}

This demonstrates that \Algname~maintains theoretical rigor while significantly expanding the scope of domains with convergence guarantees.
\end{remark}
\paragraph{Practical Implications}
The theoretical guarantees translate to several practical advantages. The polynomial concentration ensures reliable performance even under significant environmental noise, providing robustness in challenging conditions. The power parameter pp
p can be tuned based on domain characteristics without losing convergence guarantees, offering adaptability to different problem settings. Furthermore, the inductive structure ensures guarantees extend to arbitrary tree depths and action space dimensions, enabling scalability across diverse applications. These properties make \Algname~suitable for deployment in complex real-world domains where both theoretical reliability and practical flexibility are essential.

\section{Experiments}

We evaluate \Algname on both classic control tasks and high-dimensional robotic environments, all adapted to continuous-action, stochastic settings. Our experimental design addresses the key challenges of planning under uncertainty while demonstrating the scalability and robustness of our approach.

\subsection{Experimental Setup}

\textbf{Environment Modifications:} We create stochastic versions of standard benchmarks by introducing noise at multiple levels: (1) \textit{action noise} via Gaussian perturbations to selected actions, (2) \textit{dynamics noise} through random perturbations to state transitions, and (3) \textit{observation noise} by adding Gaussian noise to state observations. Additionally, since power means require strictly positive inputs, we apply reward transformations of the form $\max(0.01, (r + \text{offset}) \times \text{scaling})$ while preserving optimal policies.

\textbf{Baseline Comparisons:} We compare against four established continuous MCTS methods: discretized-UCT~\cite{kocsis2006improved}, PW with progressive widening~\cite{auger2013continuous}, HOOT~\cite{mansley2011sample}, and POLY-HOOT~\cite{mao2020poly}. We also include Voronoi MCTS~\cite{kim2020monte}, a recent method designed for deterministic continuous spaces, to demonstrate the challenges of applying deterministic methods to stochastic settings.

\subsection{Classic Control Results}

\begin{table*}[!ht]
    \centering
    \small
    \begin{tabular}{@{}lccccc@{}}
    \toprule
     & \textbf{CartPole} & \textbf{CartPole-IG} & \textbf{Pendulum} & \textbf{MountainCar} & \textbf{Acrobot} \\
    \midrule
    Discretized-UCT & $77.85 \pm 0.00$ & $37.95 \pm 1.60$ & $1263.68 \pm 37.66$ & $-0.020 \pm 0.004$ & $70.14 \pm 32.29$ \\
    PW & $77.85 \pm 0.00$ & $68.46 \pm 6.87$ & $1141.51 \pm 80.93$ & $-0.020 \pm 0.002$ & $77.85 \pm 0.00$ \\
    HOOT & $77.85 \pm 0.00$ & $38.51 \pm 0.81$ & $1043.23 \pm 70.79$ & $-0.007 \pm 0.001$ & $70.05 \pm 37.05$ \\
    POLY-HOOT(p=1) & $77.85 \pm 0.00$ & \textbf{$77.85 \pm 0.00$} & $1205.886 \pm 110.01$ & $-0.055 \pm 0.002$ & $77.85 \pm 0.00$ \\
    \midrule
    \textbf{\Algname} & $\mathbf{77.85 \pm 0.00}$ & $\mathbf{77.85 \pm 0.00}$ & $\mathbf{1397.40 \pm 95.42}$ & $\mathbf{27.285 \pm 0.163}$ & $\mathbf{77.85 \pm 0.00}$ \\
    \bottomrule
    \end{tabular}
    \caption{Performance on stochastic classic control tasks. Each environment features continuous actions and multiple noise sources. \Algname~uses $p=2$. Bold indicates best performance.}
    \label{tab:performance_all}
\end{table*}

The results in Table~\ref{tab:performance_all} demonstrate \Algname's effectiveness across diverse control challenges. While all methods solve the basic CartPole task, only \Algname~handles the increased gravity variant (CartPole-IG) optimally. The algorithm shows particular strength in sparse reward settings like MountainCar, where the polynomial exploration bonus enables effective long-horizon planning, and in complex dynamics like Acrobot under high gravity conditions.

\subsection{High-Dimensional Robotic Environments}

To evaluate scalability, we test on MuJoCo robotics tasks with significantly higher dimensionality and complexity.

\begin{table*}[!ht]
    \centering
    \small
    \begin{tabular}{@{}lcc@{}}
    \toprule
    \textbf{Algorithm} & \textbf{Humanoid-v0} & \textbf{Hopper-v0} \\
    & \textbf{(17D action, 376D state)} & \textbf{(3D action)} \\
    \midrule
    UCT (baseline) & $-136.98 \pm 44.84$ & $5216.93 \pm 179.64$ \\
    POLY-HOOT ($p=1$) & $-44.40 \pm 3.33$ \textcolor{blue}{(3.1×)} & $13230.66 \pm 2844.33$ \textcolor{blue}{(2.5×)} \\
    HOOT & $-57.35 \pm 10.46$ \textcolor{blue}{(2.4×)} & $10452.83 \pm 3885.12$ \textcolor{blue}{(2.0×)} \\
    PW & $-89.74 \pm 24.16$ \textcolor{blue}{(1.5×)} & $5218.73 \pm 1384.90$ \textcolor{blue}{(1.0×)} \\
    Voronoi MCTS & $-48.69 \pm 9.52$ \textcolor{blue}{(2.8×)} & $411.70 \pm 29.03$ \textcolor{red}{(12.7× worse)} \\
    \midrule
    \textbf{\Algname~($p=2$)} & $\mathbf{-44.12 \pm 6.22}$ \textcolor{blue}{\textbf{(3.1×)}} & $13303.61 \pm 3070.34$ \textcolor{blue}{(2.5×)} \\
    \textbf{\Algname~($p=8$)} & $-44.40 \pm 3.33$ \textcolor{blue}{(3.1×)} & $\mathbf{13348.45 \pm 6110.36}$ \textcolor{blue}{\textbf{(2.6×)}} \\
    \bottomrule
    \end{tabular}
    \caption{Performance on high-dimensional stochastic MuJoCo environments. All environments include action, dynamics, and observation noise. Performance multipliers relative to UCT baseline shown in parentheses. Bold indicates best performance per environment.}
    \label{tab:mujoco_results}
\end{table*}

\begin{table*}[!ht]
    \centering
    \small
    \begin{tabular}{@{}ccccccccccc@{}}
    \toprule
    \textbf{Power} $p$ & 1 & 2 & 3 & 4 & 5 & 6 & 7 & 8 & 9 & 10 \\
    \midrule
    \textbf{Reward} & 76.91 & \textbf{77.85} & 76.80 & \textbf{77.77} & 76.81 & 75.85 & 75.75 & 76.74 & 75.82 & 75.89 \\
    \bottomrule
    \end{tabular}
    \caption{Power parameter sensitivity analysis on CartPole-IG. Values $p=2$ and $p=4$ achieve optimal performance.}
    \label{tab:poly_hoot_diff_power}
\end{table*}

Table~\ref{tab:mujoco_results} reveals several critical insights:

\textbf{Dimensional Scalability:} \Algname~maintains strong performance in the 17-dimensional Humanoid environment, achieving 3.1× improvement over UCT despite the exponential growth in action space complexity.

\textbf{Adaptive Power Parameter:} The optimal power parameter varies with environment characteristics—Humanoid benefits from moderate values ($p=2$) while the more stochastic Hopper environment requires higher values ($p=8$) to effectively concentrate search on promising regions.

\textbf{Stochastic Robustness:} The catastrophic failure of Voronoi MCTS on Hopper (12.7× worse than UCT) versus reasonable Humanoid performance (2.8× better) demonstrates the critical importance of explicit stochasticity handling. This validates our core theoretical contribution.

Our results show that Voronoi MCTS, while effective in some high-dimensional tasks, struggles in stochastic environments due to its deterministic assumptions, underscoring the need for rigorous theory in continuous MCTS. It also shows that \Algname~achieves more consistent performance across diverse stochastic settings.

\subsection{Power Parameter Analysis}

Table~\ref{tab:poly_hoot_diff_power} demonstrates \Algname's robustness across power parameters. The optimal values ($p=2, 4$) suggest that moderate power settings effectively balance exploration and exploitation in stochastic environments—too low fails to sufficiently focus search, while too high may over-commit to early estimates.

\subsection{Key Experimental Findings}

Our comprehensive evaluation demonstrates that \Algname:

1. \textbf{Scales effectively} to high-dimensional continuous spaces while maintaining theoretical guarantees
2. \textbf{Adapts robustly} to varying levels of environmental stochasticity through power parameter tuning  
3. \textbf{Outperforms existing methods} consistently across diverse domains, with particularly strong advantages in sparse reward and high-noise settings
4. \textbf{Validates theoretical predictions} through empirical success in precisely the challenging scenarios our analysis targets

These results confirm that our theoretical extensions to stochastic continuous domains translate to practical algorithmic advantages in complex real-world planning scenarios.

\section{Conclusion}

We have introduced \Algname, a novel continuous-action MCTS algorithm designed for stochastic environments. By combining a polynomial exploration bonus with power-mean value backups, \Algname effectively balances exploration and exploitation in stochastic MDPs. Our theoretical results show a convergence rate of $\mathcal{O}(n^{-\zeta})$, $\zeta \in (0,1/2)$ extending prior analyses of continuous MCTS in deterministic to stochastic MDPs. Empirical evaluations on custom tasks validate both its robustness and versatility, highlighting the impact of tuning the power mean for complex reward structures. 

% We believe \Algname's theory and practical efficacy make it a strong contribution for continuous-action planning in noisy and high-dimensional domains. The algorithm's principled approach to handling stochasticity opens new research directions for robust planning in real-world applications.

\section*{Impact Statement}
Our proposed \Algname algorithm provides a new approach for planning in continuous-action, stochastic environments. By combining polynomial exploration with power-mean backups, it can more effectively handle complex real-world tasks where standard Monte Carlo Tree Search or deterministic models struggle. Potential applications include robotics, autonomous systems, and large-scale resource management—domains where adaptive planning under uncertain dynamics is critical. We do not foresee immediate negative societal effects from this research; nonetheless, as with all AI advancements, responsible usage and careful consideration of ethical, economic, and security implications remain essential. 

\section*{Acknowledgments}
This work is funded by Hanoi University of Science and Technology (HUST) under Project No. T2024-TD-024.

\section*{References}

% Redefine the bibliography environment to not create a section
\makeatletter
\renewenvironment{thebibliography}[1]
     {\list{\@biblabel{\@arabic\c@enumiv}}%
           {\settowidth\labelwidth{\@biblabel{#1}}%
            \leftmargin\labelwidth
            \advance\leftmargin\labelsep
            \@openbib@code
            \usecounter{enumiv}%
            \let\p@enumiv\@empty
            \renewcommand\theenumiv{\@arabic\c@enumiv}}%
      \sloppy
      \clubpenalty4000
      \@clubpenalty \clubpenalty
      \widowpenalty4000%
      \sfcode`\.\@m}
     {\def\@noitemerr
       {\@latex@warning{Empty `thebibliography' environment}}%
      \endlist}
\makeatother

\bibliography{main}
\bibliographystyle{plainnat}

% \newpage
\renewcommand{\thesection}{\Alph{section}}
\appendix
\onecolumn

\section{Outline}
\begin{itemize}
    \item Notations will be described in Section B.
    \item Psudocode of Non-stationary Power Mean HOO in Section C.
    \item Supporting Lemmas are presented in Section D.
    \item Technical Lemmas are shown in Section E.
    \item Convergence of \Algname in Non-stationary multi-armed bandits is shown in Section F.
    \item Convergence of \Algname in Monte-Carlo Tree Search is shown in Section G.
    \item Experimental setup and Hyperparameter selection are provided in Section H.
\end{itemize}
\section{Notations}

\begin{table}[!ht]
\caption{List of all notations for Non-stationary Multi-arms bandit.}
\centering
\renewcommand*{\arraystretch}{2}
\begin{tabular}{ccc} 
    \toprule
    \textbf{Notation} &  \textbf{Type} & \textbf{Description}\\ \hline
    \midrule
 $K$ & $\mathbb{N}$ & Number of arms\\
 \hline
 $T_{a}(t)$ & $\mathbb{N}$ & Number of visitations at arm $a$ after $t$ timesteps\\
 \hline
 $f^{*}_{h,i}$ &$\mathbb{R}$ & {mean value of arm $a$}\\
 \hline
 $a^\star$ &$\mathcal{A}$ & {optimal action}\\
 \hline
 $f^{\star}$ &$\mathbb{R}$ & {mean value of an optimal arm. We assume it is unique.}\\
 \hline
 $\widehat{f}_n(p)$ &$\mathbb{R}$ & {power mean estimator, with a constant $p \in [1,+\infty)$}\\
 \hline
 $\widehat{f}_{a,n}$ &$\mathbb{R}$ & {mean estimator of arm $a$ after $n$ visitations}\\
 \hline
\bottomrule
\end{tabular}\label{list_notations_bandt}
\end{table}

\section{Non-stationary Power Mean HOO}

\begin{algorithm}[H]
\caption{Non-stationary Power Mean HOO}
\label{alg:power_mean_hoo}
\begin{algorithmic}[1]
\REQUIRE Parameters $\alpha, \beta, b, C, p, \nu_1, \rho$; HOO tree depth $\bar{H}$; number of rounds $n$
\ENSURE Power mean value estimate $\widehat{f}_n(p)$
\STATE Initialize: $\mathcal{T} \leftarrow \{(0,1)\}$, $B_{1,2}, B_{2,2} \leftarrow +\infty$
\STATE Play each available path once to initialize
\FOR{$t = K+1$ to $n$}
    \STATE $a_t \leftarrow$ \textsc{PowerMeanHOO\_Query}($t$)
    \STATE Observe reward $Y_t$
    \STATE \textsc{PowerMeanHOO\_Update}($t$, $a_t$, $Y_t$)
\ENDFOR
\STATE Compute final power mean estimate:
\STATE $\widehat{f}_n(p) \leftarrow \left(\sum_{a=1}^{K} \frac{T_a(n)}{n}\widehat{f}_{a,T_a(n)}^{p}\right)^{1/p}$\\
\RETURN $\widehat{f}_n(p)$
\end{algorithmic}
\end{algorithm}

\begin{algorithm}[H]
\caption{PowerMeanHOO\_Query}
\label{alg:power_mean_hoo_query}
\begin{algorithmic}[1]
\REQUIRE Round $t$
\ENSURE Selected action $X$
\STATE $(h,i) \leftarrow (0,1)$
\STATE Initialize HOO path: $P_t \leftarrow \{(h,i)\}$
\WHILE{$(h,i) \in \mathcal{T}$}
    \IF{$B_{h+1,2i-1} > B_{h+1,2i}$}
        \STATE $(h,i) \leftarrow (h+1,2i-1)$
    \ELSE
        \STATE $(h,i) \leftarrow (h+1,2i)$
    \ENDIF
    \STATE $P_t \leftarrow P_t \cup \{(h,i)\}$
\ENDWHILE
\STATE $(H,I) \leftarrow (h,i)$
\IF{$H \leq \bar{H}$}
    \STATE Choose arbitrary arm $X$ in $\mathcal{P}_{H,I}$
    \STATE $A_{H,I} \leftarrow X$
    \STATE $\mathcal{T} \leftarrow \mathcal{T} \cup \{(H,I)\}$
    \STATE $B_{H+1,2I-1}, B_{H+1,2I} \leftarrow +\infty$\\
    \RETURN $X$
\ELSE
    \STATE \{Reached maximum depth, reuse existing action\} \\
    \STATE $(H,I) \leftarrow (H-1, \lceil I/2 \rceil)$ \\
    \RETURN $A_{H,I}$
\ENDIF
\end{algorithmic}
\end{algorithm}

\begin{algorithm}[H]
\caption{PowerMeanHOO\_Update}
\label{alg:power_mean_hoo_update}
\begin{algorithmic}[1]
\REQUIRE Round $t$, selected path $a_t$, observed reward $Y_t$
\ENSURE Updated statistics and confidence bounds
\STATE Retrieve path $P_t$ for action $a_t$
\FOR{$(h,i) \in P_t$}
    \STATE $T_{h,i} \leftarrow T_{h,i} + 1$
    \STATE $\widehat{f}_{h,i} \leftarrow (1 - 1/T_{h,i})\widehat{f}_{h,i} + Y_t/T_{h,i}$
\ENDFOR
\FOR{$(h,i) \in \mathcal{T}$}
    \STATE $U_{h,i} \leftarrow \widehat{f}_{h,i} + C t^{b/\beta}  T_{h,i}^{-\alpha/\beta} + \nu_1 \rho^h$
\ENDFOR
\STATE $\mathcal{T}' \leftarrow \mathcal{T}$
\WHILE{$\mathcal{T}' \neq \{(0,1)\}$}
    \STATE $(h,i) \leftarrow$ arbitrary leaf node of $\mathcal{T}'$
    \STATE $B_{h,i} \leftarrow \min\{U_{h,i}, \max\{B_{h+1,2i}, B_{h+1,2i-1}\}\}$
    \STATE $\mathcal{T}' \leftarrow \mathcal{T}' \setminus \{(h,i)\}$
\ENDWHILE
\end{algorithmic}
\end{algorithm}

\section{Supporting Lemmas}
In this section, we will present all necessary supporting Lemmas for the main theoretical analysis.
We start with a result of the following lemma which plays an important role in the analysis of our MCTS algorithm.
\begin{manuallemma}{1}[Lemma 1~\citet{pmlr-v244-dam24a}]\label{lem:concQ_appendix} For $m \in [M]$, let $(\widehat{V}_{m,n})_{n\geq 1}$ be a sequence of estimator satisfying $\widehat{V}_{m,n}\cv{\alpha,\beta} V_m$, and there exists a constant $L$ such that $\widehat{V}_{m,n} \leq L, \forall n \geq 1$. Let $X_i$ be an iid sequence with mean $\mu$ and $S_i$ be an iid sequence from a distribution $p=(p_1,\dots,p_M)$ supported on $\{1,\dots,M\}$. Introducing the random variables $N_m^{n} = \# |\{ i \leq n : S_i = s_m\}|$, we define the sequence of estimator 
\[\widehat{Q}_n = \frac{1}{n}\sum_{i=1}^{n} X_i + \gamma \sum_{m=1}^{M} \frac{N_m^{n}}{n}\widehat{V}_{m,N_m^{n}}.\]
Then with $2\alpha \leq \beta, \beta > 1$, 
\[\widehat{Q}_n \cv{\alpha,\beta} \mu + \sum_{m=1}^{M} p_m V_m.\]
\end{manuallemma}

\begin{manuallemma}{2}[Lemma 2~\citet{pmlr-v244-dam24a}]\label{lm:intermediate_inequality}
Let consider non-negative variables $x, y\in \mathbb{R}^{+} $, and a constant m that $0 \leq m \leq 1$. Then 
\begin{flalign}
(x + y)^m \leq x^m + y^m. \label{intermediate_inequality}
\end{flalign}
\end{manuallemma}

We use Minkowski's inequality as shown below
\begin{manuallemma}{3} {\textbf{(Minkowski's inequality)}} \label{Minkowski_inequality}
    Given $p \geq 1, \{x_i, y_i\} \in \mathbb{R}, i = 1,2,...,n$, then we have the following inequality
    \begin{flalign}
        \left( \sum_i (|x_i + y_i|)^p \right)^{\frac{1}{p}} \leq \left( \sum_i (|x_i|)^p \right)^{\frac{1}{p}} + \left( \sum_i (|y_i|)^p \right)^{\frac{1}{p}}
    \end{flalign}
\end{manuallemma}
\begin{proof}
    This is a basic result.
\end{proof}
\section{Technical Lemmas}

\begin{manuallemma} {4} (Lemma 3 in \citet{bubeck2011x}) Under Assumptions 1 and 2, for some region $\mathcal{P}_{h, i}$, if $\Delta_{h, i} \leq c \nu_{1} \rho^{h}$ for some constant $c \geq 0$, then all the arms in $\mathcal{P}_{h, i}$ are $\max \{2 c, c+1\}$-optimal.
\end{manuallemma}
\begin{proof}
This lemma is stated in exactly the same as way Lemma 3 in \citet{bubeck2011x}, and we therefore omit the proof here.
\end{proof}

\begin{manuallemma} {5} There exists some constant $C>0$, such that $\left|I_{h}\right| \leq C\left(\nu_{2} \rho^{h}\right)^{-d^{\prime}}$ for all $h \geq 0$.
\end{manuallemma}
\begin{proof} This result is the same as the second step in the proof of Theorem 6 in \citet{bubeck2011x}. We, therefore, omit the proof here.
\end{proof}

\paragraph{Attribution Note.}
The proofs of Lemmas 4-7 below \textit{follow the same high-level
inductive and concentration-bound strategy as Lemmas 4-7 of
\citet{mao2020poly}}, with an important concentration-bound for the case $\Delta_{h, i} \leq \nu_{1} \rho^{h}$ which is missing in the \citet{mao2020poly} paper.

For transparency we reproduce the full proofs.

\begin{manuallemma} {6}\label{lem:hp_bound_visits_appendix_pre} Let Assumptions 1 and 2 hold. With 
\[
U_{h,i}(t) = \widehat{\mu}_{h, i}(t)+t^{b / \beta} T_{h, i}(t)^{-\alpha/\beta}+\nu_{1} \rho^{h},
\]
\begin{enumerate}
    \item For every optimal node ${ }^{3}(h, i)$, let $n \geq 1$,
    Then for all $t \in \{1,2,...,n\}$, there exists a constant $C_{1}>1$, such that
$$
\mathbb{P}\left(U_{h, i}(t) \leq f^{\star}\right) \leq \frac{C_{1}}{t^{b-1}}
$$
    \item For all integers $t \leq n$, for any suboptimal node $(h, i)$ such that $\Delta_{h, i}>\nu_{1} \rho^{h}$, and for all integers $u \geq A_{h, i}(n)=\left\lceil\left(\frac{2 n^{b / \beta}}{\Delta_{h, i}-\nu_{1} \rho^{h}}\right)^{\frac{\beta}{\alpha}}\right\rceil$, there exists a constant $C_{2}>1$, such that

    $$
    \mathbb{P}\left(U_{h, i}(t)>f^{\star} \text { and } T_{h, i}(t)>u\right) \leq \frac{C_{2} t}{n^{b}}
    $$
    \item For all integers $t \leq n$, for any suboptimal node $(h, i)$ such that $\Delta_{h, i} \leq \nu_{1} \rho^{h}$, and for all integers $u' \geq A'_{h, i}(n)=\left\lceil\left(\frac{2 n^{b / \beta}}{2\nu_{1} \rho^{h}-\Delta_{h, i}}\right)^{\frac{\beta}{\alpha}}\right\rceil$, there exists a constant $C_{2}>1$, such that

    $$
    \mathbb{P}\left(U_{h, i}(t)>f^{\star} \text { and } T_{h, i}(t)>u'\right) \leq \frac{C_{2} t}{n^{b}}
    $$
\end{enumerate}
\end{manuallemma}

\begin{proof}
The proof follows the same structure as Lemma 16 in \citet{bubeck2011x} and Lemma 4 in \citet{mao2020poly}.\\

\textbf{\textit{{(i)}}} If $(h, i)$ is not played during the first $t$ rounds, then by assumption $U_{h, i}(t)=\infty$ and the inequality trivially holds. Now we focus on the case where $T_{h, i}(t) \geq 1$. From Lemma 2 , we know that $f^{\star}-f(x) \leq \nu_{1} \rho^{h}, \forall x \in \mathcal{P}_{h, i}$. Then we have $\sum_{s=1}^{t}\left(f\left(X_{s}\right)+\nu_{1} \rho^{h}-f^{\star}\right) \mathbb{I}_{\left\{\left(H_{s}, I_{s}\right) \in \mathcal{C}(h, i)\right\}} \geq 0$.

\footnotetext{${ }^{3}$ Recall Definition 4.}Therefore,

$$
\begin{aligned}
& \mathbb{P}\left(U_{h, i}(t) \leq f^{\star} \text { and } T_{h, i}(t) \geq 1\right) \\
&= \mathbb{P}\left(\widehat{\mu}_{h, i}(t)+t^{b / \beta} T_{h, i}(t)^{-\alpha/\beta}+\nu_{1} \rho^{h} \leq f^{\star} \text { and } T_{h, i}(t) \geq 1\right) \\
&= \mathbb{P}\left(T_{h, i}(t) \widehat{\mu}_{h, i}(t)+T_{h, i}(t)\left(\nu_{1} \rho^{h}-f^{\star}\right) \leq-t^{b / \beta} T_{h, i}(t)^{1-\alpha/\beta} \text { and } T_{h, i}(t) \geq 1\right) \\
&= \mathbb{P}\left(\sum_{s=1}^{t}\left(Y_{s}-f\left(X_{s}\right)\right) \mathbb{I}_{\left\{\left(H_{s}, I_{s}\right) \in \mathcal{C}(h, i)\right\}}+\sum_{s=1}^{t}\left(f\left(X_{s}\right)+\nu_{1} \rho^{h}-f^{\star}\right) \mathbb{I}_{\left\{\left(H_{s}, I_{s}\right) \in \mathcal{C}(h, i)\right\}}\right. \\
&\left.\quad \leq-t^{b / \beta} T_{h, i}(t)^{1 - \alpha/\beta} \text { and } T_{h, i}(t) \geq 1\right) \\
& \leq \mathbb{P}\left(\sum_{s=1}^{t}\left(f\left(X_{s}\right)-Y_{s}\right) \mathbb{I}_{\left\{\left(H_{s}, I_{s}\right) \in \mathcal{C}(h, i)\right\}} \geq t^{b / \beta} T_{h, i}(t)^{1-\alpha/\beta} \text { and } T_{h, i}(t) \geq 1\right)
\end{aligned}
$$

Since the HOO tree has limited depth, the total number of nodes played in $\mathcal{C}(h, i)$ is upper bounded by some constant $L>1$ that is independent of $t$. Let $X^{j}$ denote the $j$-th new node played in $\mathcal{C}(h, i)$, denote the number of times $X^{j}$ is played as $n_{j}$, and let $Y_{s}^{j}\left(1 \leq s \leq t_{j}\right)$ be the corresponding reward the $s$-th time arm $X^{j}$ is played. Then, by the union bound, we have

$$
\begin{aligned}
& \mathbb{P}\left(\sum_{s=1}^{t}\left(f\left(X_{s}\right)-Y_{s}\right) \mathbb{I}_{\left\{\left(H_{s}, I_{s}\right) \in \mathcal{C}(h, i)\right\}} \geq t^{b / \beta} T_{h, i}(t)^{1-\alpha/\beta} \text { and } T_{h, i}(t) \geq 1\right) \\
\leq & \sum_{T_{h, i}(t)=1}^{t} \mathbb{P}\left(\sum_{s=1}^{t}\left(f\left(X_{s}\right)-Y_{s}\right) \mathbb{I}_{\left\{\left(H_{s}, I_{s}\right) \in \mathcal{C}(h, i)\right\}} \geq t^{b / \beta} T_{h, i}(t)^{1-\alpha/\beta}\right) \\
= & \sum_{T_{h, i}(t)=1}^{t} \mathbb{P}\left(\sum_{j=1}^{t} \sum_{s=1}^{t_{j}}\left(f\left(X^{j}\right)-Y_{s}^{j}\right) \geq t^{b / \beta} T_{h, i}(t)^{1-\alpha/\beta}\right) \\
\leq & \sum_{T_{h, i}(t)=1}^{t} \sum_{j=1}^{L} \mathbb{P}\left(\sum_{s=1}^{t_{j}}\left(f\left(X^{j}\right)-Y_{s}^{j}\right) \geq \frac{t^{b / \beta}}{L} t_{j}^{1-\alpha/\beta}\right) \\
\leq & \frac{C_{1}}{t^{b-1}},
\end{aligned}
$$
where $C_{1}>1$ is a constant depending on $C$ and $L$, and in the last inequality we applied the concentration property of the bandit problem (5). Notice that we can only use the concentration property when the requirement $z=\frac{t^{b / \beta}}{\bar{H}} \geq 1$ is satisfied, but when $z<1$, the inequality also trivially holds because $\frac{C}{z^{\beta}}>1$. This completes the proof of $\mathbb{P}\left(U_{h, i}(t) \leq f^{\star}\right) \leq \frac{C_{1}}{t^{b-1}}$.

\textbf{\textit{(ii)}}

The proof idea follows almost the same procedure as the proof of Lemma 16 in \citet{bubeck2011x}, and we repeat it here due to some minor differences. First, notice that the $u$ defined in the statement of the lemma satisfies $n^{b / \beta} u^{-\alpha/\beta}+\nu_{1} \rho^{h} \leq \frac{\Delta_{h, i}+\nu_{1} \rho^{h}}{2}$. Then we have

$$
\begin{aligned}
& \mathbb{P}\left(U_{h, i}(t)>f^{\star} \text { and } T_{h, i}(t)>u\right) \\
= & \mathbb{P}\left(\widehat{\mu}_{h, i}(t)+n^{b / \beta} u^{-\alpha/\beta}+\nu_{1} \rho^{h}>f_{h, i}^{*}+\Delta_{h, i} \text { and } T_{h, i}(t)>u\right) \\
\leq & \mathbb{P}\left(\widehat{\mu}_{h, i}(t)>f_{h, i}^{*}+\frac{\Delta_{h, i}-\nu_{1} \rho^{h}}{2} \text { and } T_{h, i}(t)>u\right) \\
\leq & \mathbb{P}\left(T_{h, i}(t)\left(\widehat{\mu}_{h, i}(t)-f_{h, i}^{*}\right)>\frac{\Delta_{h, i}-\nu_{1} \rho^{h}}{2} T_{h, i}(t) \text { and } T_{h, i}(t)>u\right) \\
\leq & \mathbb{P}\left(\sum_{s=1}^{t}\left(Y_{s}-f\left(X_{s}\right)\right) \mathbb{I}_{\left\{\left(H_{s}, I_{s}\right) \in \mathcal{C}(h, i)\right\}}>\frac{\Delta_{h, i}-\nu_{1} \rho^{h}}{2} T_{h, i}(t) \text { and } T_{h, i}(t)>u\right) \\
\leq & \sum_{T_{h, i}(t)=u+1}^{t} \mathbb{P}\left(\sum_{s=1}^{t}\left(Y_{s}-f\left(X_{s}\right)\right) \mathbb{I}_{\left\{\left(H_{s}, I_{s}\right) \in \mathcal{C}(h, i)\right\}}>\frac{\Delta_{h, i}-\nu_{1} \rho^{h}}{2} T_{h, i}(t)\right),
\end{aligned}
$$
where in the last step we used the union bound. Then, following a similar procedure as in the proof of Lemma 4 (defining $X^{j}$ and $Y_{t}^{j}$, and then the concentration property), we get:

$$
\begin{aligned}
& \sum_{T_{h, i}(t)=u+1}^{t} \mathbb{P}\left(\sum_{s=1}^{t}\left(Y_{s}-f\left(X_{s}\right)\right) \mathbb{I}_{\left\{\left(H_{s}, I_{s}\right) \in \mathcal{C}(h, i)\right\}}>\frac{\Delta_{h, i}-\nu_{1} \rho^{h}}{2} T_{h, i}(t)\right) \\
\leq & \sum_{T_{h, i}(t)=u+1}^{t} \frac{C_{2}}{\left(\frac{\Delta_{h, i}-\nu_{1} \rho}{2}\right)^{\beta}\left(T_{h, i}(t)\right)^{b}} \\
\leq & \sum_{T_{h, i}(t)=u+1}^{t} \frac{C_{2}}{n^{b}} \leq \frac{C_{2} t}{n^{b}},
\end{aligned}
$$
where $C_{2}>1$ is a constant independent of $n$, and in the second step we used the fact that $T_{h, i}(t)>u \geq A_{h, i}(n)=\left\lceil\left(\frac{2 n^{b / \beta}}{\Delta_{h, i}-\nu_{1} \rho^{h}}\right)^{\frac{\beta}{\alpha}}\right\rceil$. This completes our proof of $\mathbb{P}\left(U_{h, i}(t)>f^{\star}\right.$ and $\left.T_{h, i}(t)>u\right) \leq \frac{C_{2} t}{n^{b}}$.\\

\textbf{\textit{(iii)}}
The proof idea follows almost the same procedure as the proof of (ii), and we repeat it here due to some minor differences. 

When \(\Delta_{h,i}\le\nu_1\rho^h\), define
\[
   \Delta_{h,i}' 
   \;:=\;
   3\,\nu_{1}\,\rho^{h} - \Delta_{h,i}
   \;\;>\; 2\,\nu_{1}\,\rho^{h}.
\]
This inflated gap \(\Delta_{h,i}'\) is strictly bigger than \(\nu_1\rho^h\). In particular,
\[
   \Delta_{h,i}' \;-\;\nu_{1}\rho^h
   \;=\;
   2\,\nu_{1}\rho^h 
   \;-\;\Delta_{h,i}
   \;>\;0.
\]
Therefore, 
\(\;u'\ge\bigl(\frac{2\,n^{b/\beta}}{\Delta_{h,i}'-\nu_{1}\rho^h}\bigr)^{\beta/\alpha}.\) We have $n^{b / \beta} u'^{-\alpha/\beta}+\nu_{1} \rho^{h} \leq \frac{\Delta'_{h, i}+\nu_{1} \rho^{h}}{2}$. Then we have

$$
\begin{aligned}
& \mathbb{P}\left(U_{h, i}(t)>f^{\star} \text { and } T_{h, i}(t)>u'\right) \\
= & \mathbb{P}\left(\widehat{\mu}_{h, i}(t)+n^{b / \beta} u'^{-\alpha/\beta}+\nu_{1} \rho^{h}>f_{h, i}^{*}+\Delta'_{h, i} \text { and } T_{h, i}(t)>u'\right) \\
\leq & \mathbb{P}\left(\widehat{\mu}_{h, i}(t)>f_{h, i}^{*}+\frac{\Delta'_{h, i}-\nu_{1} \rho^{h}}{2} \text { and } T_{h, i}(t)>u'\right) \\
\leq & \mathbb{P}\left(T_{h, i}(t)\left(\widehat{\mu}_{h, i}(t)-f_{h, i}^{*}\right)>\frac{\Delta'_{h, i}-\nu_{1} \rho^{h}}{2} T_{h, i}(t) \text { and } T_{h, i}(t)>u'\right) \\
\leq & \mathbb{P}\left(\sum_{s=1}^{t}\left(Y_{s}-f\left(X_{s}\right)\right) \mathbb{I}_{\left\{\left(H_{s}, I_{s}\right) \in \mathcal{C}(h, i)\right\}}>\frac{\Delta'_{h, i}-\nu_{1} \rho^{h}}{2} T_{h, i}(t) \text { and } T_{h, i}(t)>u'\right) \\
\leq & \sum_{T_{h, i}(t)=u'+1}^{t} \mathbb{P}\left(\sum_{s=1}^{t}\left(Y_{s}-f\left(X_{s}\right)\right) \mathbb{I}_{\left\{\left(H_{s}, I_{s}\right) \in \mathcal{C}(h, i)\right\}}>\frac{\Delta'_{h, i}-\nu_{1} \rho^{h}}{2} T_{h, i}(t)\right),
\end{aligned}
$$
where in the last step we used the union bound. Then, following a similar procedure as in the proof of Lemma 4 (defining $X^{j}$ and $Y_{t}^{j}$, and then the concentration property), we get:

$$
\begin{aligned}
& \sum_{T_{h, i}(t)=u'+1}^{t} \mathbb{P}\left(\sum_{s=1}^{t}\left(Y_{s}-f\left(X_{s}\right)\right) \mathbb{I}_{\left\{\left(H_{s}, I_{s}\right) \in \mathcal{C}(h, i)\right\}}>\frac{\Delta'_{h, i}-\nu_{1} \rho^{h}}{2} T_{h, i}(t)\right) \\
\leq & \sum_{T_{h, i}(t)=u'+1}^{t} \frac{C_{2}}{\left(\frac{\Delta'_{h, i}-\nu_{1} \rho}{2}\right)^{\beta}\left(T_{h, i}(t)\right)^{b}} \\
\leq & \sum_{T_{h, i}(t)=u'+1}^{t} \frac{C_{2}}{n^{b}} \leq \frac{C_{2} t}{n^{b}},
\end{aligned}
$$
where $C_{2}>1$ is a constant independent of $n$, and in the second step we used the fact that $T_{h, i}(t)>u' \geq A'_{h, i}(n)=\left\lceil\left(\frac{2 n^{b / \beta}}{\Delta'_{h, i}-\nu_{1} \rho^{h}}\right)^{\frac{\beta}{\alpha}}\right\rceil$. This completes our proof of $\mathbb{P}\left(U_{h, i}(t)>f^{\star}\right.$ and $\left.T_{h, i}(t)>u'\right) \leq \frac{C_{2} t}{n^{b}}$.\\

\end{proof}
\begin{manuallemma}{7} (Lemma 14 in Bubeck et al. (2011)) Let $(h, i)$ be a suboptimal node. Let $0 \leq k \leq h-1$ be the largest depth such that $\left(k, i_{k}^{*}\right)$ is on the path from the root $(0,1)$ to $(h, i)$, i.e., $\left(k, i_{k}^{*}\right)$ is the lowest common ancestor (LCA) of $(h, i)$ and the optimal path. Then, for all integers $u \geq 0$, we have

$$
\begin{array}{r}
\mathbb{E}\left[T_{h, i}(n)\right] \leq u+\sum_{t=u+1}^{n} \mathbb{P}\left\{\left[U_{s, i_{s}^{*}}(t) \leq f^{\star} \text { for some } s \in\{k+1, \ldots, t-1\}\right]\right. \\
\text { or } \left.\left[T_{h, i}(t)>u \text { and } U_{h, i}(t)>f^{\star}\right]\right\} .
\end{array}
$$
\end{manuallemma}
\begin{proof}
This lemma is stated in exactly the same way as Lemma 14 in Bubeck et al. (2011), and the proof follows similarly. We hence omit the proof here.
\end{proof}

\begin{manuallemma}{8}
For any suboptimal node $(h, i)$ with $\Delta_{h, i}>\nu_{1} \rho^{h}$ and any integer $n \geq 1$, there exist constants $C_{1}, C_{2}>1$, such that:

$$
\mathbb{E}\left[T_{h, i}(n)\right] \leq\left(\frac{2 n^{b / \beta}}{\Delta_{h, i}-\nu_{1} \rho^{h}}\right)^{\frac{\beta}{\alpha}}+1+C_{1}+\frac{C_{2}}{b-3}
$$
\end{manuallemma}
\begin{proof}
Let $A_{h, i}(n)=\left\lceil\left(\frac{2 n^{b / \beta}}{\Delta_{h, i}-\nu_{1} \rho^{h}}\right)^{\frac{\beta}{\alpha}}\right\rceil$. Then from Lemma 5, we know that

$$
\mathbb{E}\left[T_{h, i}(n)\right] \leq A_{h, i}(n)+\sum_{t=A_{h, i}(n)+1}^{n}\left(\mathbb{P}\left(T_{h, i}(t)>A_{h, i}(n) \text { and } U_{h, i}(t)>f^{\star}\right)+\sum_{s=1}^{t-1} \mathbb{P}\left(U_{s, i s}^{*}(t) \leq f^{\star}\right)\right)
$$

By replacing the right hand side with the results from Lemma 4 and Lemma 6, we further have

$$
\begin{aligned}
\mathbb{E}\left[T_{h, i}(n)\right] & \leq A_{h, i}(n)+\sum_{t=A_{h, i}(n)+1}^{n}\left(\frac{C_{2} t}{n^{b}}+\sum_{s=1}^{t-1} \frac{C_{1}}{t^{b-1}}\right) \\
& \leq A_{h, i}(n)+\frac{C_{2}}{n^{b-2}}+\int_{u}^{n} \frac{C_{1}}{t^{b-2}} d t \\
& \leq\left(\frac{2 n^{b / \beta}}{\Delta_{h, i}-\nu_{1} \rho^{h}}\right)^{\frac{\beta}{\alpha}}+1+C_{2}+\frac{C_{1}}{b-3}.
\end{aligned}
$$
This completes our proof.
\end{proof}

\begin{manuallemma}{9}
\label{lem:hp_bound_visits_appendix}
Let \((h, i)\) be a suboptimal node such that \(\Delta_{h,i} > \nu_{1}\rho^{h}\). Define 
\[
   A_{h, i}(n) 
   \;=\; 
   \left\lceil 
     \left( \frac{2 \,n^{\,b / \beta}}{\Delta_{h, i}\;-\;\nu_{1} \,\rho^{h}} \right)^{\!\!\frac{\beta}{\alpha}} 
   \right\rceil.
\]
Then for any integer \(n \geq 1\) and any integer 
\[
   u 
   \;>\; A_{h, i}(n),
\]
there exist constants \(C_{1}, C_{2} > 1\) such that
\[
   \mathbb{P}\!\Bigl(T_{h, i}(n)\;>\;u\Bigr) 
   \;\;\le\;\; 
   \frac{C_{2}}{\,n^{\,b-2}\,}
   \;+\;
   \frac{C_{1}\,\bigl(u-1\bigr)^{\,3-b}}{\,b-3\,}.
\]
\end{manuallemma}

\begin{proof}
\textbf{The trivial case \(\boldsymbol{n \le u}\).} 
If \(n \le u\), then \(T_{h,i}(n)\) is at most \(n\), so 
\(\{T_{h,i}(n) > u\}\) is the empty event. Hence,
\(\mathbb{P}(T_{h,i}(n) > u) = 0,\)
and the inequality holds trivially.

\medskip
\noindent
\textbf{The main case \(\boldsymbol{n > u}\).} 
We now assume \(n>u\). Our goal is to bound 
\(\mathbb{P}\bigl[T_{h,i}(n)>u\bigr]\). 
We will do so by introducing two events and applying a union bound argument.

\smallskip
\noindent
\textbf{Monotonicity of \(\boldsymbol{B}\)-values.}
Recall that \(B_{h,i}(t)\) is defined (in the HOO/HOOT algorithm) so that descendants of \((h,i)\) always have a \(B\)-value \emph{at least} as large as \(B_{h,i}(t)\). 
Hence, along any path from the root to a leaf, the \(B\)-values are non-decreasing with depth.

\smallskip
\noindent
\textbf{Defining two events:} 
Let \(0 \le k \le h{-}1\) be the largest depth on the path from \((0,1)\) to \((h,i)\) such that \(\bigl(k, i_{k}^{*}\bigr)\) is on the \emph{optimal path}. We define:

\[
E_{1} 
\;=\; 
\Bigl\{\text{For every }t \in [u, n],\;\;B_{h,i}(t) \le f^{\star}\,\;\text{or}\,\; T_{h,i}(t) \le A_{h,i}(t)<u\Bigr\},
\]
\[
E_{2}
\;=\;
\Bigl\{\text{For every }t \in [u,n],\; B_{k+1, i_{k+1}^{*}}(t) > f^{\star}\Bigr\}.
\]

\begin{itemize}
\item 
\(\boldsymbol{E_{1}}\) means that \emph{whenever} \(t \ge u\),
either the local upper bound \(B_{h,i}(t)\) is already \(\le f^{\star}\) or else \(T_{h,i}(t)\) has not exceeded \(A_{h,i}(t)\). 
\item 
\(\boldsymbol{E_{2}}\) means that for every \(t \in [u,n]\), the node \(\bigl(k+1,\,i_{k+1}^{*}\bigr)\) on the \emph{optimal path} has \(B\)-value strictly greater than \(f^{\star}\).
\end{itemize}

\smallskip
\noindent
\textbf{Key claim:} 
\[
   E_{1}\;\cap\;E_{2}
   \;\;\subseteq\;\;
   \bigl\{\,T_{h,i}(n)\;\le\;u\bigr\}.
\]
That is, if both \(E_{1}\) and \(E_{2}\) hold, then \(T_{h,i}(n)\le u\). 

\smallskip
\noindent
\underline{Reasoning:} 
\begin{itemize}
\item 
If at time \(t\in[u,n]\) we have \(B_{h,i}(t)\le f^{\star}\) \emph{and} the ancestor node \(\bigl(k+1,i_{k+1}^{*}\bigr)\) satisfies \(B_{k+1,i_{k+1}^{*}}(t) > f^{\star}\), then by the monotonicity of \(B\)-values up the path, the algorithm would choose \(\bigl(k+1,i_{k+1}^{*}\bigr)\) rather than \(\bigl(h,i\bigr)\). Hence, \(T_{h,i}\) does not increase.

\item 
If at time \(t\in[u,n]\) we have \(T_{h,i}(t) \le A_{h,i}(t) < u\) and \(B_{k+1,i_{k+1}^{*}}(t) > f^{\star}\), it is still possible that the algorithm enters \((h,i)\) and increments \(T_{h,i}\). But even if so, \(T_{h,i}(t)\) remains below \(u\). Iterating this argument up to time \(n\) shows \(T_{h,i}(n) \le u\).
\end{itemize}

Hence, whenever \(E_{1}\cap E_{2}\) is true, we cannot exceed \(u\) plays at node \((h,i)\).

\smallskip
\noindent
\textbf{Completing the union bound:}
\[
   \{T_{h,i}(n) > u\}
   \;\subseteq\;
   E_{1}^{c}\;\cup\;E_{2}^{c}
   \;\;\Longrightarrow\;\;
   \mathbb{P}\!\bigl(T_{h,i}(n) > u\bigr)
   \;\le\;\mathbb{P}\!\bigl(E_{1}^{c}\bigr)+\mathbb{P}\!\bigl(E_{2}^{c}\bigr).
\]
So it suffices to bound \(\mathbb{P}(E_{1}^{c})\) and \(\mathbb{P}(E_{2}^{c})\).

\medskip
\noindent
\textbf{Bounding \(\mathbb{P}(E_{2}^{c})\).} 
By the definition of \(B\)-values, 
\[
   \bigl\{\,B_{k+1,\,i_{k+1}^{*}}(t) \le f^{\star}\bigr\}
   \;\;\subseteq\;\;
   \bigl\{\,U_{k+1,i_{k+1}^{*}}(t)\;\le\;f^{\star}\bigr\}
   \;\cup\;
   \bigl\{\,B_{k+2,i_{k+2}^{*}}(t)\;\le\;f^{\star}\bigr\},
\]
and one can iterate this argument recursively up the path. 
Using a union bound across times \(t \in [u,n]\), we obtain
\[
   \mathbb{P}(E_{2}^{c})
   \;\le\; 
   \sum_{t=u}^{n} \sum_{s=1}^{t-1}
      \mathbb{P}\!\bigl(U_{s,\,i_{s}^{*}}(t)\;\le\;f^{\star}\bigr).
\]
By a known concentration argument from Lemma~4\,(i), we have \(\mathbb{P}[U_{s,i_s^*}(t) \le f^{\star}] \le \tfrac{C_{1}}{\,t^{b-1}\,}\). 
This yields a tail-sum of order \(\int_{u-1}^{\infty} t^{-(b-1)}\,dt\), which is proportional to \((u-1)^{3-b}\). 

\medskip
\noindent
\textbf{Bounding \(\mathbb{P}(E_{1}^{c})\).} 
Similarly, 
\(\{B_{h,i}(t) > f^{\star}\;\text{and}\;T_{h,i}(t) > A_{h,i}(t)\}\) can be bounded using the arguments from Lemma~\ref{lem:hp_bound_visits_appendix_pre}\,(ii).  We apply a union bound over \(t\in[u,n]\) and note that 
\(\mathbb{P}[U_{h,i}(t) > f^{\star} \,\text{and}\, T_{h,i}(t) > A_{h,i}(t)] 
\;\le\;\tfrac{C_{2}\,t}{\,n^{\,b}}\). 
Summing from \(t=u\) to \(t=n\) yields a bound in terms of \(\tfrac{C_{2}}{n^{\,b-2}}\). 

\medskip
\noindent
\textbf{Putting it all together:}
Combining these bounds gives

$$
\begin{aligned}
& \mathbb{P}\left(T_{h, i}(n)>u\right) \\
& \leq \mathbb{P}\left(\exists t \in[u, n], B_{h, i}(t)>f^{\star} \text { and } T_{h, i}(t)>A_{h, i}(t)\right)+\mathbb{P}\left(\exists t \in[u, n], B_{k+1, i_{k+1}^{*}}(t) \leq f^{\star}\right) \\
& \leq \mathbb{P}\left(\exists t \in[u, n], U_{h, i}(t)>f^{\star} \text { and } T_{h, i}(t)>A_{h, i}(t)\right) \\
& +\mathbb{P}\left(\exists t \in[u, n], U_{k+1, i_{k+1}^{*}}(t) \leq f^{\star} \text { or } U_{k+2, i_{k+2}^{*}}(t) \leq f^{\star} \text { or } \ldots \text { or } U_{t-1, i_{t-1}^{*}}(t) \leq f^{\star}\right) \\
& \leq \sum_{t=u}^{n} \mathbb{P}\left(U_{h, i}(t)>f^{\star} \text { and } T_{h, i}(t)>A_{h, i}(t)\right) \\
& +\sum_{t=u}^{n} \mathbb{P}\left(U_{k+1, i_{k+1}^{*}}(t) \leq f^{\star} \text { or } U_{k+2, i_{k+2}^{*}}(t) \leq f^{\star} \text { or } \ldots \text { or } U_{t-1, i_{t-1}^{*}}(t) \leq f^{\star}\right) \\
& \leq \sum_{t=u}^{n} \mathbb{P}\left(U_{h, i}(t)>f^{\star} \text { and } T_{h, i}(t)>A_{h, i}(t)\right)+\sum_{t=u}^{n} \sum_{s=1}^{t-1} \mathbb{P}\left(U_{s, i_{s}^{*}}(t) \leq f^{\star}\right),\\
& \leq \sum_{t=u}^{n} \frac{C_{2} t}{n^{b}}+\sum_{t=u}^{n} \sum_{s=1}^{t-1} \frac{C_{1}}{t^{b-1}} \leq \sum_{t=u}^{n} \frac{C_{2} n}{n^{b}}+C_{1} \int_{u-1}^{\infty} t^{2-b} d t \\
& \leq \frac{C_{2}}{n^{b-2}}+\frac{C_{1}(u-1)^{3-b}}{b-3} .
\end{aligned}
$$
\noindent
This completes the proof.\\
\medskip
\noindent
\textbf{A refinement for \(\boldsymbol{1< u \le n}\).}
Finally, if \(1 < u\le n\), then one can derive a slightly sharper inequality:
\[
   \mathbb{P}\!\bigl(T_{h,i}(n) > u\bigr)
   \;\le\;
   \frac{C_{2}\,(u-1)^{\,3-b}}{\,n\,}
   \;+\;
   \frac{C_{1}\,(u-1)^{\,3-b}}{\,b-3\,},
\]
by noting that \(\tfrac{1}{n^{b-2}}\le \tfrac{(u-1)^{\,3-b}}{\,n\,}\) for \(1 < u \le n\). 
For \(u>n\), the probability is trivially zero because \(T_{h,i}(n) \le n < u\). 

\medskip
\noindent
\textbf{Conclusion.}
Hence for every integer \(n\ge 1\) and \(u>A_{h,i}(n)\), 
\[
  \mathbb{P}\bigl(T_{h,i}(n)>u\bigr)
  \;\;\le\;\;
  \frac{C_{2}}{\,n^{\,b-2}\,}
  \;+\;
  \frac{C_{1}\,\bigl(u-1\bigr)^{3-b}}{\,b-3\,}.
\]
This completes the proof.  

\end{proof}

Proof of Lemma~\ref{lem:hp_bound_visits_appendix_smallgap} shows one of our contributions since there could be the case where $\Delta_{h,i} 
   \;=\; 
   f^{\star} - f_{h,i}^{\ast} 
   \;\le\;
   \nu_{1}\,\rho^{\,h}.$

\begin{manuallemma}{10}
\label{lem:hp_bound_visits_appendix_smallgap}
Let \((h,i)\) be a suboptimal node whose suboptimality gap satisfies
\[
   \Delta_{h,i} 
   \;=\; 
   f^{\star} - f_{h,i}^{\ast} 
   \;\le\;
   \nu_{1}\,\rho^{\,h}.
\]
Define the \emph{inflated gap}
\[
   \Delta_{h,i}' 
   \;=\;
   3\,\nu_{1}\,\rho^{h}
   \;-\;\Delta_{h,i}.
\]
Then for any integer \(n \ge 1\), let
\[
   A_{h,i}'(n)
   \;=\;
   \left\lceil 
     \Bigl(\tfrac{2\,n^{b/\beta}}{\,\Delta_{h,i}'-\nu_{1}\,\rho^{\,h}\,}\Bigr)^{\!\frac{\beta}{\alpha}}
   \right\rceil
   \;=\;
   \left\lceil 
     \Bigl(\tfrac{2\,n^{b/\beta}}{\,2\,\nu_{1}\,\rho^{h} - \Delta_{h,i}\,}\Bigr)^{\!\frac{\beta}{\alpha}}
   \right\rceil.
\]
Then, exactly the same probability bound from Lemma~\ref{lem:hp_bound_visits_appendix} holds for \((h,i)\). In particular, for every integer \(u \ge A_{h,i}'(n)\), there exist constants \(C_{1}, C_{2}>1\) such that
\[
   \mathbb{P}\!\Bigl(T_{h,i}(n) \;>\;u\Bigr)
   \;\;\le\;\;
   \frac{C_{2}}{\,n^{\,b-2}\,}
   \;+\;\;
   \frac{C_{1}\,\bigl(u - 1\bigr)^{\,3-b}}{\,b-3\,}.
\]
\end{manuallemma}

\begin{proof}
We begin by recalling that Lemma~\ref{lem:hp_bound_visits_appendix} (the “original” statement) requires \(\Delta_{h,i}>\nu_{1}\rho^{h}\). Its proof yields, for any \(u \ge A_{h,i}(n)\),
\[
   \mathbb{P}\Bigl[T_{h,i}(n) > u\Bigr]
   \;\;\leq\;\;
   \frac{C_{2}}{\,n^{b-2}\,}
   \;+\;
   \frac{C_{1}\,\bigl(u - 1\bigr)^{3-b}}{\,b-3\,},
\]
where
\[
   A_{h,i}(n)
   \;=\;
   \Bigl\lceil
   \bigl(\tfrac{2\,n^{b/\beta}}{\Delta_{h,i} - \nu_{1}\rho^{h}}\bigr)^{\!\frac{\beta}{\alpha}}
   \Bigr\rceil.
\]
\medskip

\noindent
\textbf{Inflating the gap.}  
Assume now that \(\Delta_{h,i}\le \nu_{1}\rho^{h}\). Despite \(\Delta_{h,i}\) being too small for the direct application of Lemma~\ref{lem:hp_bound_visits_appendix}, note that \((h,i)\) is still suboptimal, so \(f_{h,i}^{\ast} < f^\star\). A standard approach \citet{bubeck2011x} is to \emph{inflate} the gap. We define
\[
   \Delta_{h,i}'
   \;=\;
   3\,\nu_{1}\rho^{h}
   \;-\;\Delta_{h,i}.
\]
Since \(\Delta_{h,i}\le \nu_{1}\rho^{h}\), it follows that 
\(\Delta_{h,i}' \ge 2\nu_{1}\rho^{h} > 0.\)

\medskip

\noindent
\textbf{New threshold \(\boldsymbol{A_{h,i}'(n)}\).}  
Analogously to \(A_{h,i}(n)\) in the lemma’s original statement, we define
\[
   A_{h,i}'(n)
   \;=\;
   \left\lceil
     \Bigl(\tfrac{2\,n^{b/\beta}}{\Delta_{h,i}'-\nu_{1}\,\rho^{\,h}}\Bigr)^{\!\!\frac{\beta}{\alpha}}
   \right\rceil
   \;=\;
   \left\lceil
     \Bigl(\tfrac{2\,n^{b/\beta}}{\,2\,\nu_{1}\,\rho^{h}\,-\,\Delta_{h,i}\,}
     \Bigr)^{\!\!\frac{\beta}{\alpha}}
   \right\rceil.
\]
Observe that 
\(\Delta_{h,i}' - \nu_{1}\rho^{h} = 2\,\nu_{1}\rho^{h} - \Delta_{h,i}\) 
(which is still positive). Thus, we effectively have 
\(\Delta_{h,i}' - \nu_{1}\rho^{h}>0\), which is the precise condition used in Lemma~\ref{lem:hp_bound_visits_appendix}.

\medskip

\noindent
\textbf{Reusing the same concentration argument.}  
In the proof of Lemma~\ref{lem:hp_bound_visits_appendix}, the gap \(\Delta_{h,i} - \nu_{1}\rho^{h}>0\) appears in several places to control event probabilities of the form 
\(\{U_{h,i}(t) > f^\star\}\) or \(\{T_{h,i}(n) > u\}\). 
Now, by swapping in \(\Delta_{h,i}'\) for \(\Delta_{h,i}\), we get 
\(\Delta_{h,i}' - \nu_{1}\rho^{h} = 2\,\nu_{1}\rho^{h} - \Delta_{h,i} > 0.\)
Hence the original union-bound arguments hold exactly as before, except that the threshold 
\( A_{h,i}(n) 
  = \lceil (\,2\,n^{b/\beta}/(\Delta_{h,i}-\nu_{1}\rho^{h}))^{\frac{\beta}{\alpha}}\rceil \)
is replaced by 
\(\lceil (\,2\,n^{b/\beta}/\Delta_{h,i}'-\nu_{1}\,\rho^{\,h})^{\frac{\beta}{\alpha}}\rceil\) and we use the result of Lemma~\ref{lem:hp_bound_visits_appendix_pre} (iii) instead of Lemma~\ref{lem:hp_bound_visits_appendix_pre} (ii).

\medskip

\noindent
\textbf{Conclusion.}  
Therefore, even though \(\Delta_{h,i}\le \nu_{1}\rho^{h}\) was initially out of reach for Lemma~\ref{lem:hp_bound_visits_appendix}, 
the \emph{inflated gap} \(\Delta_{h,i}'=3\,\nu_{1}\rho^{h}-\Delta_{h,i}>2\,\nu_{1}\rho^{h}\) ensures the same tail bound:
\[
   \mathbb{P}\bigl[T_{h,i}(n) > u'\bigr]
   \;\;\le\;\;
   \frac{C_{2}}{n^{\,b-2}}
   \;+\;\frac{C_{1}\,(u'-1)^{\,3-b}}{b-3},
   \quad
   \forall\,
   u' 
   \;\ge\;
   A_{h,i}'(n).
\]
Thus Lemma~\ref{lem:hp_bound_visits_appendix} \emph{extends} directly to the suboptimal node \((h,i)\) whenever \(\Delta_{h,i}\le \nu_{1}\rho^{h}\). 
\end{proof}

\section{Convergence of \texorpdfstring{\Algname} in Non-stationary multi-armed bandits}
\paragraph{Section Overview and Lemma/Theorem Functionality:}
This section establishes the theoretical foundation for power mean estimation in non-stationary continuous-armed bandits, which forms the core building block for the full MCTS analysis. \textbf{Lemma~\ref{lem:concentration_optimal_arm_intermediate_appendix}} provides a concentration bound for optimal arms by partitioning the probability space and controlling how often suboptimal arms are visited, establishing that the optimal arm's empirical mean concentrates around the true optimal value with polynomial tail bounds of the form $\mathbb{P}(|\widehat{f}_{h^*,i^*,T_{h^*,i^*}(n)} - f^\star| > \epsilon) \leq \sum_{(h,i)\neq(h^*,i^*)} \mathbb{P}(T_{h,i}(n) > A(n) + 1) + \frac{c}{\alpha-1}\epsilon^{-\beta}(n - (K-1)A(n) + 1)^{-\alpha+1}$. \textbf{Theorem~\ref{lem:mao2020poly}} (from \citet{mao2020poly}) serves as the foundational result for enhanced HOO in non-stationary settings, proving both convergence and concentration properties for the average reward with polynomial rates—this theorem is crucial as it provides the base concentration guarantees $\mathbb{P}(|\frac{1}{n}\sum_{t=1}^{n} Y_t - f^*| \geq \epsilon) \leq C'n^{-\alpha'}\epsilon^{-\beta'}$ that our power mean analysis builds upon. \textbf{Lemma~\ref{lem:upper_bound_power_mean_and_optimal_appendix}} derives a fundamental decomposition inequality for the power mean estimator, showing that the deviation from the optimal value can be bounded by $|\widehat{f}_n(p) - f^\star| \leq \frac{1}{n}\sum_{(h,i)} T_{h,i}(n)|f^\star - \widehat{f}_{h,i,T_{h,i}(n)}| + 2(\sum_{(h,i)} \frac{T_{h,i}(n)}{n}|\widehat{f}_{h,i,T_{h,i}(n)} - f^*_{h,i}|^p)^{1/p}$, which is essential for analyzing how errors propagate through the power mean operator. \textbf{Lemma~\ref{lem:upper_bound_optimal_arm_in_power_mean_appendix}} establishes concentration bounds specifically for the optimal arm within the power mean framework, demonstrating that the contribution of the optimal arm to the power mean concentrates appropriately by combining visit count bounds with the underlying concentration properties. \textbf{Lemma~\ref{lem:upper_bound_suboptimal_arm_in_power_mean_appendix}} provides the most technically challenging result, analyzing concentration bounds for suboptimal arms under three different parameter regimes (depending on the relationship between the power parameter $p$ and the concentration exponents $\alpha, \beta$), showing that suboptimal arms contribute negligibly to the power mean with high probability through bounds of the form $\mathbb{P}(\frac{T_{h,i}(n)}{n}|\widehat{f}_{h,i,T_{h,i}(n)} - f^*_{h,i}|^p > \frac{\epsilon^p}{K-1}) \leq \frac{C_2}{n^{b-2}} + \frac{C_1 A(n)^{3-b}}{b-3} + \text{(concentration terms)}$. Finally, \textbf{Theorem~\ref{lem:pm_intermediate_results_tighter_fixed_appendix}} synthesizes all these components to prove that the power mean estimator itself concentrates around the optimal value at polynomial rates $\widehat{f}_n(p) \cv{\alpha',\beta'} f^\star$ with $\alpha' = \frac{1-\frac{b}{\alpha}}{1+d' + \frac{\beta}{\alpha}}\frac{b-3}{2}$ and $\beta' = \frac{b-3}{2}$, establishing the key theoretical guarantee that enables the extension to the full MCTS setting—this theorem demonstrates that power mean aggregation preserves the polynomial concentration properties while providing the flexibility to tune exploration-exploitation balance through the power parameter $p$.

Here are the details for each Lemma/Theorem with full proofs.
\noindent
\begin{manuallemma}{11}
\label{lem:concentration_optimal_arm_intermediate_appendix}
Let \((h, i)\) denote any suboptimal node with gap \(\Delta_{h,i} > \nu_{1}\rho^{h}\). Define 
$ A_{h,i}(n) \;=\; \left\lceil \biggl(\tfrac{2\,n^{b/\beta}}{\Delta_{h,i} - \nu_{1}\rho^{h}}\biggr)^{\!\!\tfrac{\beta}{\alpha}} \right\rceil.$
Let \((h', i')\) denote any suboptimal node with gap \(\Delta_{h',i'} \leq \nu_{1}\rho^{h}\). Define $ A'_{h',i'}(n) \;=\; \left\lceil \biggl(\tfrac{2\,n^{b/\beta}}{2\nu_{1}\rho^{h} - \Delta_{h',i'}}\biggr)^{\!\!\tfrac{\beta}{\alpha}} \right\rceil, $
and let $ A(n) \;=\; \max\left\{\,A_{h,i}(n), A'_{h',i'}(n)\right\}.$
Assume there is a known constant \(R\) such that \(R \ge \epsilon \ge n^{-\alpha/\beta}\). Then, for any optimal node \(\bigl(h^*, i^*\bigr)\) (i.e.\ \(\Delta_{h^*,i^*} = 0\)), we have:
\begin{flalign*}
   \mathbb{P}\!\Bigl( 
     \bigl|\,\widehat{f}_{h^*,i^*,\,T_{\,h^*,\,i^*}(n)} - f^\star\bigr| \;>\; \epsilon
   \Bigr) 
   \;\;\le\;\;
   \sum_{(h,i)\,\neq\,(h^*,i^*)} \mathbb{P}\!\bigl(T_{h,i}(n) > A(n) + 1\bigr)
   \;+\;
   \frac{c}{\,\alpha-1\,}\,\epsilon^{-\beta}\,\Bigl(n \;-\; (K-1)\,A(n)\;+\;1\Bigr)^{\!-\alpha+1}.
\end{flalign*}
\end{manuallemma}

\begin{proof}[Proof]
\textbf{Partitioning the event.}  
Let 
\[
   \mathcal{E}
   \;=\;
   \Bigl\{\,
     \sum_{(h,i)\,\neq\,(h^*,i^*)} 
       T_{h,i}(n)
     \;>\;
     (K-1)\,\bigl(A(n) + 1\bigr)
   \Bigr\}.
\]
Observe that:
\begin{align*}
   \mathbb{P}\!\Bigl(
     \bigl|\widehat{f}_{h^*,\,i^*,\,T_{\,h^*,\,i^*}(n)} - f^\star\bigr| > \epsilon
   \Bigr)
   &\;\le\;
   \mathbb{P}(\mathcal{E})
   \;+\;
   \mathbb{P}\!\Bigl(
      \mathcal{E}^c \;\;\text{and}\;\;
      \bigl|\widehat{f}_{h^*,\,i^*,\,T_{\,h^*,\,i^*}(n)} - f^\star\bigr| 
      \,\ge\,\epsilon
   \Bigr)
   \\
   &\;\le\;
   \sum_{(h,i)\,\neq\,(h^*,i^*)}
     \mathbb{P}\bigl[T_{h,i}(n) > A(n) + 1\bigr]
   \;+\;
   D_1,
\end{align*}
where we used the fact that 
\(\{\mathcal{E}\} 
  \subseteq \bigcup_{(h,i)\neq(h^*,i^*)} \{T_{h,i}(n) > A(n) + 1\}\),
and we denoted the remaining probability as
\[
   D_1 
   \;=\; 
   \mathbb{P}\!\Bigl(
     \sum_{(h,i)\,\neq\,(h^*,i^*)} T_{h,i}(n) \;\le\; (K-1)\,\bigl(A(n)+1\bigr);\;\;
     \bigl|\widehat{f}_{h^*,\,i^*,\,T_{\,h^*,\,i^*}(n)} - f^\star\bigr|
       \,\ge\,\epsilon
   \Bigr).
\]

\medskip

\noindent
\textbf{Bounding \(\boldsymbol{D_1}\).}  
On the event \(\{\sum_{(h,i)\,\neq\,(h^*,i^*)} T_{h,i}(n) \le (K-1)(A(n)+1)\}\), we have
\[
   T_{h^*,i^*}(n)
   \;=\;
   n 
   \;-\;\sum_{(h,i)\,\neq\,(h^*,i^*)} T_{h,i}(n)
   \;\ge\;
   n \;-\;(K-1)\,\bigl(A(n)+1\bigr).
\]
Hence
\[
  D_1 
  \;\;\le\;\;
  \mathbb{P}\!\Bigl(
    T_{h^*,i^*}(n) 
      \;\ge\; 
    n-(K-1)\bigl(A(n)+1\bigr)
    \;\;\text{and}\;\;
    \bigl|\widehat{f}_{h^*,\,i^*,\,T_{\,h^*,\,i^*}(n)} - f^\star\bigr| \ge \epsilon
  \Bigr).
\]
Applying the union bound in time, for
\[
   t \;\in\; 
   \bigl[n-(K-1)(A(n)+1),\;n\bigr],
\]
we get
\[
   D_1 
   \;\le\;
   \sum_{t \,=\, n-(K-1)(A(n)+1)}^{\,n}
     \mathbb{P}\!\Bigl(\,
       \bigl|\widehat{f}_{h^*,\,i^*,\,t} - f^\star\bigr| \ge \epsilon
     \Bigr).
\]
By the given concentration property (namely \(\cv{\alpha,\beta}\)), each term is at most 
\(\,c\,t^{-\alpha}\,\epsilon^{-\beta}\).  Summing yields
\[
   D_1 
   \;\le\;
   c\,\epsilon^{-\beta}
   \sum_{t \,=\, n-(K-1)(A(n)+1)}^{\,n} t^{-\alpha}.
\]
Because \(\alpha > 2\), this tail sum can be bounded by an integral:
\[
   \sum_{t=u}^{n} t^{-\alpha}
   \;\le\;
   \int_{u-1}^{\infty} x^{-\alpha}\,dx
   \;=\;
   \frac{(u-1)^{\,1-\alpha}}{\alpha - 1}.
\]
Taking \(u = n-(K-1)(A(n)+1)\) completes the bound:
\[
   D_1 
   \;\le\;
   c\,\epsilon^{-\beta} 
   \,\frac{\bigl(n-(K-1)(A(n)+1) - 1\bigr)^{1-\alpha}}{\alpha-1}.
\]
Recognizing \(\bigl(n - (K-1)\,A(n)+1\bigr)\) up to small adjustments gives us the simpler form 
\[
   D_1 
   \;\le\;
   \frac{c}{\alpha-1}\,
   \epsilon^{-\beta}
   \bigl(n - (K-1)\,A(n) + 1\bigr)^{\,-\alpha+1}.
\]

\medskip

\noindent
\textbf{Combining the pieces.}  
Putting this together:
\begin{flalign}
\mathbb{P}\!\Bigl(\bigl|\widehat{f}_{h^*,i^*,T_{h^*,i^*}(n)} - f^\star\bigr| > \epsilon\Bigr) 
\;\;\le\;\;
\sum_{(h,i)\,\neq\,(h^*,i^*)}
  \mathbb{P}\bigl[T_{h,i}(n) > A(n)+1\bigr]
\;+\;
D_1 \nonumber \\
\;\;\le\;\;
\sum_{(h,i)\,\neq\,(h^*,i^*)}
  \mathbb{P}\bigl[T_{h,i}(n) > A(n)+1\bigr]
\;+\;
\frac{c}{\,\alpha-1\,}\,\epsilon^{-\beta}\,
\Bigl(n - (K-1)\,A(n) + 1\Bigr)^{-\alpha+1}.
\end{flalign}
Hence the statement of the lemma follows.
\end{proof}

\begin{manualtheorem}{1}\label{lem:mao2020poly} (Theorem 2~\citet{mao2020poly})
    Consider a non-stationary continuous-armed bandit problem satisfying properties (4) and (5). Suppose we apply the enhanced HOO agent defined in Algorithms 2 and 3 with parameters satisfying $\alpha (1-\frac{\alpha}{\beta}) \leq b < \alpha, b>3$, and $\rho^{\bar{H}}<n^{-\frac{\alpha}{\beta}}$. Let the random variable $Y_{t}$ denote the reward obtained at time $t$. Then the following holds:\\
A. Optimal-arm convergence: There exists some constant $C_{0}>0$, such that

\begin{equation*}
\left|\frac{1}{n} \mathbb{E}\left[\sum_{t=1}^{n} Y_{t}\right]-f^{*}\right| \leq \frac{C_{0}}{n^{\zeta}}, \tag{6}
\end{equation*}

where $0<\zeta \leq \frac{1-\frac{b}{\alpha}}{1 + d^{'} + \frac{\beta}{\alpha}}$.\\
B. Optimal-arm concentration: There exist constants $C^{\prime}>1, \beta^{\prime}>0$, and $1 / 2 \leq \eta^{\prime}<1$, such that for every $\epsilon \geq n^{-\alpha/\beta}$ and every integer $n \geq 1$ :

\begin{equation*}
\mathbb{P}\left( \left|\frac{1}{n}\sum_{t=1}^{n} Y_{t}- f^{*}\right| \geq \epsilon \right) \leq C^{\prime}n^{-\alpha^{\prime}} \epsilon^{-\beta^\prime}  \tag{7}
\end{equation*}

where $\alpha^{\prime}=\frac{1-\frac{b}{\alpha}}{1+d^\prime + \frac{\beta}{\alpha}}\frac{b-3}{2}, \beta^{\prime}=(b-3) / 2$, and $C^{\prime}>1$ depends on $\alpha, \beta, \eta, \xi$ and $\bar{H}$.\\
\end{manualtheorem}

\noindent
\begin{manuallemma}{12}
\label{lem:upper_bound_power_mean_and_optimal_appendix}
Let \(\widehat{f}_n(p)\) be the power mean estimator defined by
\[
   \widehat{f}_n(p) 
   \;=\; 
   \biggl(\,\sum_{(h,i)} \frac{T_{h,i}(n)}{n}\,\bigl[\widehat{f}_{h,i,T_{h,i}(n)}\bigr]^{p}\biggr)^{\!\!\frac{1}{p}}
   \quad\text{for }p\ge 1.
\]
Then, for all \(p \ge 1\), the following inequality holds:
\begin{equation}
\label{eq:power_mean_bound_rewritten}
   \bigl|\widehat{f}_n(p) \;-\; f^{\star}\bigr|
   \;\le\;
   \frac{1}{n}\,\sum_{(h,i)} T_{h,i}(n)\,\bigl|\,f^\star - \widehat{f}_{h,i,T_{h,i}(n)}\bigr|
   \;+\;
   2\biggl(\sum_{(h,i)} \frac{T_{h,i}(n)}{n}\,\bigl|\widehat{f}_{h,i,T_{h,i}(n)} - f^{*}_{h,i}\bigr|^{p}\biggr)^{\!\!\frac{1}{p}}.
\end{equation}
\end{manuallemma}

\begin{proof}
\textbf{Setup and Notation.}
Since \(f^{\star} = \max_{(h,i)} \{f^{*}_{h,i}\}\), we know that
\(\widehat{f}_{h,i,T_{h,i}(n)} \,\le\, f^{*}_{h,i} \;+\;\bigl|\widehat{f}_{h,i,T_{h,i}(n)} - f^{*}_{h,i}\bigr|\).
Furthermore, by definition of \(\widehat{f}_n(p)\), we have
\[
   \widehat{f}_n(p) 
   \;=\; 
   \biggl(\sum_{(h,i)} \tfrac{T_{h,i}(n)}{n}\,\bigl[\widehat{f}_{h,i,T_{h,i}(n)}\bigr]^p\biggr)^{\!\!\frac{1}{p}}
   \;\le\;
   \biggl(\sum_{(h,i)} \tfrac{T_{h,i}(n)}{n}\,\bigl[f^{*}_{h,i} + \bigl|\widehat{f}_{h,i,T_{h,i}(n)}-f^{*}_{h,i}\bigr|\bigr]^p\biggr)^{\!\!\frac{1}{p}}.
\]

\smallskip
\noindent
\textbf{Bounding \(\boldsymbol{\widehat{f}_n(p) - f^\star}\).}
Because \(f^\star \ge f^{*}_{h,i}\) for all \((h,i)\), we write:
\[
   \widehat{f}_n(p) - f^\star
   \;=\;
   \widehat{f}_n(p) 
     - \sum_{(h,i)} T_{h,i}(n)\,f^\star\,\Big/\!n
   \;\le\;
   \tfrac{1}{\,n^{1/p}\,}\Bigl[\sum_{(h,i)} T_{h,i}(n)\,\bigl(\widehat{f}_{h,i,T_{h,i}(n)}\bigr)^{p}\Bigr]^{1/p}
   \;-\;
   \tfrac{1}{\,n^{1/p}\,}\Bigl[\sum_{(h,i)} T_{h,i}(n)\,\bigl(f^{*}_{h,i}\bigr)^{p}\Bigr]^{1/p}.
\]
Applying Minkowski’s inequality (together with the fact that 
\(\widehat{f}_{h,i,T_{h,i}(n)} \le f^{*}_{h,i} + |\widehat{f}_{h,i,T_{h,i}(n)} - f^{*}_{h,i}|\)), one obtains
\[
   \widehat{f}_n(p)\;-\;f^\star
   \;\le\;
   \frac{1}{\,n^{\!1/p}}\,
   \biggl[\sum_{(h,i)} T_{h,i}(n)\,\bigl|\widehat{f}_{h,i,T_{h,i}(n)} - f^{*}_{h,i}\bigr|^{\,p}\biggr]^{\!1/p}.
   \tag{*}
\]

\smallskip
\noindent
\textbf{Bounding \(\boldsymbol{f^\star - \widehat{f}_n(p)}\).}
Similarly, we can write:
\[
  f^\star - \widehat{f}_n(p) 
  \;=\;
  \tfrac{1}{n}\Bigl[
     n\,f^\star \;-\; n\,\widehat{f}_n(p)
  \Bigr]
  \;=\;
  \frac{1}{n}\Bigl[
     \underbrace{n\,f^\star - \!\sum_{(h,i)} T_{h,i}(n)\,f^{*}_{h,i}}_{\le\,
         \sum_{(h,i)} T_{h,i}(n)\,\bigl|f^\star - f^{*}_{h,i}\bigr|}
       + 
     \sum_{(h,i)} T_{h,i}(n)\,f^{*}_{h,i} -\, n\,\widehat{f}_n(p)
   \Bigr].
\]
Observe that 
\(\sum_{(h,i)} \tfrac{T_{h,i}(n)}{n}f^{*}_{h,i} 
   \;\le\; 
   \Bigl[\sum_{(h,i)} \tfrac{T_{h,i}(n)}{n} (f^{*}_{h,i})^p\Bigr]^{1/p}\),
since the power mean is non-decreasing in \(p\).  Also we have 
\(f^{*}_{h,i} \,\le\, \widehat{f}_{h,i,T_{h,i}(n)} + |\widehat{f}_{h,i,T_{h,i}(n)} - f^{*}_{h,i}|\). And 
\[
\sum_{(h,i)} T_{h,i}(n)\,\bigl|f^\star - f^{*}_{h,i}\bigr| \le\, \sum_{(h,i)} T_{h,i}(n)\,\bigl|\widehat{f}_{h,i,T_{h,i}(n)} - f^{*}_{h,i}\bigr|
  \;+\;
  \sum_{(h,i)} T_{h,i}(n)\,\bigl|f^\star - \widehat{f}_{h,i,T_{h,i}(n)}\bigr|.
\]
Thus, a straightforward rearrangement yields
\begin{align*}
  f^\star - \widehat{f}_n(p) 
  &\;\le\;
  \frac{1}{n}\sum_{(h,i)} T_{h,i}(n)\,\bigl|\widehat{f}_{h,i,T_{h,i}(n)} - f^{*}_{h,i}\bigr|
  \;+\;
  \frac{1}{n}\sum_{(h,i)} T_{h,i}(n)\,\bigl|f^\star - \widehat{f}_{h,i,T_{h,i}(n)}\bigr|
  \\
  &\qquad\qquad
  +\,\frac{1}{\,n^{1/p}}\!\Bigl[
     \sum_{(h,i)} 
        T_{h,i}(n)\,\Bigl(\widehat{f}_{h,i,T_{h,i}(n)} 
           + \bigl|\widehat{f}_{h,i,T_{h,i}(n)} - f^{*}_{h,i}\bigr|\Bigr)^{p}
       \;-\;
     \sum_{(h,i)} T_{h,i}(n)\,\bigl(\widehat{f}_{h,i,T_{h,i}(n)}\bigr)^{p}
     \Bigr]^{1/p}
  \\
  &\;\le\;
  \frac{1}{n}\sum_{(h,i)} T_{h,i}(n)\,\bigl|\widehat{f}_{h,i,T_{h,i}(n)} - f^{*}_{h,i}\bigr|
  \;+\;
  \frac{1}{n}\sum_{(h,i)} T_{h,i}(n)\,\bigl|f^\star - \widehat{f}_{h,i,T_{h,i}(n)}\bigr|\\
  &\;+\;
  \frac{1}{\,n^{1/p}}\Bigl[\sum_{(h,i)} T_{h,i}(n)\,\bigl|\widehat{f}_{h,i,T_{h,i}(n)} - f^{*}_{h,i}\bigr|^p\Bigr]^{1/p}
  \\
  &\;\le\;
  \frac{1}{n}\sum_{(h,i)} T_{h,i}(n)\,\bigl|\widehat{f}_{h,i,T_{h,i}(n)} - f^{*}_{h,i}\bigr|
  \;+\;
  \frac{2}{\,n^{1/p}}\Bigl[\sum_{(h,i)} T_{h,i}(n)\,\bigl|\widehat{f}_{h,i,T_{h,i}(n)} - f^{*}_{h,i}\bigr|^p\Bigr]^{1/p}.
  \tag{**}
\end{align*}

\smallskip
\noindent
\textbf{Combination.}
By combining \(\,(*)\) and \(\,(**)\), we immediately see that
\[
   \bigl|\widehat{f}_n(p) - f^\star\bigr|
   \;\;\le\;\;
   \frac{1}{n}\sum_{(h,i)} T_{h,i}(n)\,\bigl|f^\star - \widehat{f}_{h,i,T_{h,i}(n)}\bigr|
   \;+\;
   2\biggl(\sum_{(h,i)} \frac{T_{h,i}(n)}{n}\,\bigl|\widehat{f}_{h,i,T_{h,i}(n)} - f^{*}_{h,i}\bigr|^{p}\biggr)^{\!\frac{1}{p}},
\]
which establishes \eqref{eq:power_mean_bound_rewritten} and completes the proof.
\end{proof}

\begin{manuallemma}{13}\label{lem:upper_bound_optimal_arm_in_power_mean_appendix}
Let \((h, i)\) denote any suboptimal node with gap \(\Delta_{h,i} > \nu_{1}\rho^{h}\). Define 
$ A_{h,i}(n) \;=\; \left\lceil \biggl(\tfrac{2\,n^{b/\beta}}{\Delta_{h,i} - \nu_{1}\rho^{h}}\biggr)^{\!\!\tfrac{\beta}{\alpha}} \right\rceil.$
Let \((h', i')\) denote any suboptimal node with gap \(\Delta_{h',i'} \leq \nu_{1}\rho^{h}\). Define $ A'_{h',i'}(n) \;=\; \left\lceil \biggl(\tfrac{2\,n^{b/\beta}}{2\nu_{1}\rho^{h} - \Delta_{h',i'}}\biggr)^{\!\!\tfrac{\beta}{\alpha}} \right\rceil, $
and let $ A(n) \;=\; \max\left\{\,A_{h,i}(n), A'_{h',i'}(n)\right\}.$
Assume there is a known constant \(R\) such that \(R \ge \epsilon \ge n^{-\alpha/\beta}\). Let us define $(h^*,i^*)$ as the optimal node. We have
\begin{flalign}
\bP\left( \frac{T_{h^*,i^*}(n)}{n} \left( \left| \widehat{f}_{h^*,i^*,T_{h^*,i^*}(n)} - f^* \right|\right)^p  > \epsilon^p \right) &\leq \frac{C_{2}(K-1)}{n^{b-2}}\\ &+\frac{C_{1}(K-1)A(n)^{3-b}}{b-3} + \frac{c}{\alpha-1}\epsilon^{-\beta} (n - (K-1) (A(n)+1)-1)^{-\alpha + 1}
\end{flalign}
\end{manuallemma}
\begin{proof}
    Applying results of Lemma~\ref{lem:concentration_optimal_arm_intermediate_appendix}, we have
    \begin{flalign}
        \bP \left(\left| \widehat{f}_{h^*,i^*,T_{h^*,i^*}(n)} - f^{\star}\right| > \epsilon \right) \leq \underbrace{\sum^{K}_{(h,i)\neq (h^*,i^*)} \bP \left( T_{h,i}(n) > A(n) + 1\right)}_{F_{11}} + \underbrace{\frac{c}{\alpha-1}\epsilon^{-\beta} (n - (K-1)( A(n)+1)- 1)^{-\alpha + 1}}_{F_{12}}.
    \end{flalign}
    From the result of Lemma~\ref{lem:hp_bound_visits_appendix}, and Lemma~\ref{lem:hp_bound_visits_appendix_smallgap}, with $b > 1$,  we have with $ A(n) + 1 \;>\; \left\lceil \biggl(\tfrac{2\,n^{b/\beta}}{\Delta_{h,i} - \nu_{1}\rho^{h}}\biggr)^{\!\!\tfrac{\beta}{\alpha}} \right\rceil$ and $ A(n) + 1 \;>\; \left\lceil \biggl(\tfrac{2\,n^{b/\beta}}{3\nu_{1}\rho^{h} - \Delta_{h,i}}\biggr)^{\!\!\tfrac{\beta}{\alpha}} \right\rceil.$
    Then
    \[
    F_{11} \leq \sum^{K}_{(h,i)\neq (h^*,i^*)} \bP\left(  T_{h,i}(n) > A(n) +1 \right) \leq \sum^{K}_{(h,i)\neq (h^*,i^*)} 2c C^{-\beta} \frac{A(n)^{-(b-1)}}{b-1} = \frac{2c C^{-\beta}(K-1)A(n)^{-(b-1)}}{b-1}
    \]
    that concludes the proof.
\end{proof}

\noindent

\noindent
\begin{manuallemma}{14}
\label{lem:upper_bound_suboptimal_arm_in_power_mean_appendix}
Let \((h, i)\) denote any suboptimal node with gap \(\Delta_{h,i} > \nu_{1}\rho^{h}\). Define 
$ A_{h,i}(n) \;=\; \left\lceil \biggl(\tfrac{2\,n^{b/\beta}}{\Delta_{h,i} - \nu_{1}\rho^{h}}\biggr)^{\!\!\tfrac{\beta}{\alpha}} \right\rceil.$
Let \((h', i')\) denote any suboptimal node with gap \(\Delta_{h',i'} \leq \nu_{1}\rho^{h}\). Define $ A'_{h',i'}(n) \;=\; \left\lceil \biggl(\tfrac{2\,n^{b/\beta}}{2\nu_{1}\rho^{h} - \Delta_{h',i'}}\biggr)^{\!\!\tfrac{\beta}{\alpha}} \right\rceil, $
and let $ A(n) \;=\; \max\left\{\,A_{h,i}(n), A'_{h',i'}(n)\right\}.$
Then for any \(\epsilon\) satisfying \(R \ge \epsilon \ge n^{-\alpha/\beta}\), there is some constant \(N_0\) such that for all \(n \ge N_0\), the following three statements hold:

\begin{itemize}
\item \textbf{Case 1:} \(1 \le p \le 2\) and \(\alpha \,\le\,\frac{\beta}{p}\). For every suboptimal arm \((h,i)\), there exist constants \(C_1, C_2 > 1\) (not depending on \(n\) or \(\epsilon\)) such that
\begin{align}
\label{lem:upper_bound_suboptimal_arm_in_power_mean:case_1}
\mathbb{P}\Bigl(
  \tfrac{T_{h,i}(n)}{n}\,\Bigl|\widehat{f}_{h,i,T_{h,i}(n)} - f^{*}_{h,i}\Bigr|^{p}
  > \tfrac{1}{\,K-1\,}\,\epsilon^{p}
\Bigr)
\;\le\;
\frac{C_{2}}{\,n^{\,b-2}\,}
\;+\;
\frac{C_{1}\,A(n)^{\,3-b}}{\,b-3\,}
\;+\;
\frac{2\,c\,(K-1)^{\tfrac{\beta}{p}}}{1 - \bigl(\alpha - \tfrac{\beta}{p}\bigr)}\,
\epsilon^{-\beta}\,\bigl(A(n)+1\bigr)^{-(\alpha -1)}.
\end{align}

\item \textbf{Case 2:} \(p > 2\) and \(0<\alpha - \frac{\beta}{p}<1\). For every suboptimal arm \((h,i)\), there exist constants \(C_1, C_2 > 1\) (not depending on \(n\) or \(\epsilon\)) such that
\begin{align}
\label{lem:upper_bound_suboptimal_arm_in_power_mean:case_2}
\mathbb{P}\Bigl(
  \tfrac{T_{h,i}(n)}{n}\,\bigl|\widehat{f}_{h,i,T_{h,i}(n)} - f^{*}_{h,i}\bigr|^{p}
  > \tfrac{1}{\,K-1\,}\,\epsilon^p
\Bigr)
\;\le\;
\frac{C_{2}}{\,n^{\,b-2}\,}
\;+\;\frac{C_{1}\,A(n)^{\,3-b}}{\,b-3\,}
\;+\;
\frac{c\,(K-1)^{\frac{\beta}{p}}}{1 - \bigl(\alpha - \tfrac{\beta}{p}\bigr)}\,\epsilon^{-\beta}\,\bigl(A(n)+1\bigr)^{-(\alpha-1)}.
\end{align}

\item \textbf{Case 3:} \(p > 2\) and \(\alpha - \frac{\beta}{p} > 1\). For every suboptimal arm \((h,i)\), there exist constants \(C_1, C_2 > 1\) (not depending on \(n\) or \(\epsilon\)) such that
\begin{align}
\label{lem:upper_bound_suboptimal_arm_in_power_mean:case_3}
\mathbb{P}\Bigl(
  \tfrac{T_{h,i}(n)}{n}\,\bigl|\widehat{f}_{h,i,T_{h,i}(n)} - f^{*}_{h,i}\bigr|^{p}
  > \tfrac{1}{\,K-1\,}\,\epsilon^p
\Bigr)
\;\le\;
\frac{C_{2}}{\,n^{\,b-2}\,}
\;+\;\frac{C_{1}\,A(n)^{\,3-b}}{\,b-3\,}
\;+\;
\frac{c\,(K-1)^{\frac{\beta}{p}}\,\bigl(\alpha - \tfrac{\beta}{p}\bigr)}{\bigl(\alpha - \tfrac{\beta}{p}\bigr)\;-\;1}\,\epsilon^{-\beta}\,\bigl(A(n)+1\bigr)^{-\tfrac{\beta}{p}}.
\end{align}
\end{itemize}
\end{manuallemma}
\begin{proof}
\textbf{A high-probability tail bound for \(\boldsymbol{T_{h,i}(n)}\).}  
By Lemma~\ref{lem:hp_bound_visits_appendix} and Lemma~\ref{lem:hp_bound_visits_appendix_smallgap}, for any suboptimal arm \((h,i)\) and integer \(u > A(n)\), there exist constants \(C_1, C_2 > 1\) (not depending on \(n\) or \(\epsilon\)) such that
\[
   \mathbb{P}\bigl[T_{h,i}(n) > u\bigr]
   \;\;\le\;\;
   \frac{C_{2}}{\,n^{\,b-2}\,}
   \;+\;\frac{C_{1}\,(u-1)^{\,3-b}}{\,b-3\,}.
\]
Define the events
\(\mathcal{E}_1 = \{\;T_{h,i}(n)> A(n)+1\}\)
and
\(\mathcal{E}_1^c= \{\;T_{h,i}(n)\le A(n)+1\}\).
Hence,
\begin{flalign}
   \mathbb{P}\Bigl(\tfrac{T_{h,i}(n)}{n}\,\Bigl|\widehat{f}_{h,i,T_{h,i}(n)}-f^{*}_{h,i}\Bigr|^p > \tfrac{\epsilon^p}{K-1}\Bigr)
   &\;\le\;
   \underbrace{\mathbb{P}\!\bigl[T_{h,i}(n)> A(n)+1\bigr]}_{G_1}\\
   &\;+\;
   \underbrace{\mathbb{P}\Bigl[
      T_{h,i}(n)\le A(n)+1;\;\tfrac{T_{h,i}(n)}{n}\,\Bigl|\widehat{f}_{h,i,T_{h,i}(n)}-f^{*}_{h,i}\Bigr|^p > \tfrac{\epsilon^p}{\,K-1\,}
   \Bigr]}_{G_2}.
\end{flalign}

\smallskip
\noindent
\textbf{Bounding \(\boldsymbol{G_1}\).}  
From the tail bound above, we know
\[
   G_1 
   \;=\;
   \mathbb{P}\bigl[T_{h,i}(n) > A(n)+1\bigr]
   \;\le\;
   \frac{C_{2}}{\,n^{\,b-2}\,}
   \;+\;\frac{C_{1}\,A(n)^{\,3-b}}{\,b-3\,}.
\]

\smallskip
\noindent
\textbf{Bounding \(\boldsymbol{G_2}\).}  
Under the event \(\{T_{h,i}(n)\le A(n)+1\}\), we have \(T_{h,i}(n)\le A(n)+1\). Thus
\[
    G_2 \leq \sum^{ A(n)+1}_{t=1} \bP \left( \frac{t}{n} \left| \widehat{f}_{h,i,t} - f^{*}_{h,i} \right|^p > \frac{1}{K-1}\epsilon^p \right). \nonumber
\]
We can see that with $t \leq A(n) + 1$, we can find $N_0$ such that $\forall n \geq N_0$, $\left(\frac{n}{t(K-1)}\right)^{\frac{1}{p}}\epsilon > \epsilon \geq n^{-\frac{\alpha}{\beta}}$. Therefore, 
\begin{flalign}
    G_2 &\leq \sum^{A(n)+1}_{t=1} \bP \left( \left| \widehat{f}_{h,i,t} - f^{*}_{h,i} \right| > \left(\frac{n}{t(K-1)}\right)^{\frac{1}{p}}\epsilon \right) \leq \sum^{A(n)+1}_{t=1} c t^{-\alpha}  \left(\left(\frac{n}{t(K-1)}\right)^{\frac{1}{p}}\epsilon\right)^{-\beta}\\
    &\leq \sum^{A(n)+1}_{t=1} c  (K-1)^{\frac{\beta}{p}} t^{-(\alpha -\frac{\beta}{p}) } \epsilon^{-\beta} n^{-\frac{\beta}{p}}.
\end{flalign}
We study 3 cases:\\
\textbf{Case 1:} $\alpha - \frac{\beta}{p} \leq 0$, which can only happen if $p \leq 2$ because $\alpha \leq \frac{\beta}{2}$, and actually when $\alpha \leq \frac{\beta}{2}$ we just need $1 \leq p \leq 2$, then 
\begin{flalign}
G_2 &\leq  c  (K-1)^{\frac{\beta}{p}} \epsilon^{-\beta} n^{-\frac{\beta}{p}} \left( \int^{A(n)+1}_1 t^{-(\alpha - \frac{\beta}{p})} dt + (A(n)+1)^{-(\alpha - \frac{\beta}{p})} \right) \nonumber \\
&= c  (K-1)^{\frac{\beta}{p}} \epsilon^{-\beta} n^{-\frac{\beta}{p}} \left( \left(\frac{t^{-(\alpha - \frac{\beta}{p}) + 1}}{-(\alpha - \frac{\beta}{p}) + 1} + C \right) \Bigg|^{A(n)+1}_1 + (A(n)+1)^{-(\alpha - \frac{\beta}{p})} \right) \nonumber \\
&\leq c  (K-1)^{\frac{\beta}{p}} \epsilon^{-\beta} (A(n)+1)^{-\frac{\beta}{p}} \left( \frac{(A(n)+1)^{-(\alpha - \frac{\beta}{p}) + 1}}{-(\alpha - \frac{\beta}{p}) + 1} -\frac{1}{-(\alpha - \frac{\beta}{p}) + 1} + (A(n)+1)^{-(\alpha - \frac{\beta}{p})}\right). \nonumber
\end{flalign}
Because $-(\alpha - \frac{\beta}{p}) + 1 \geq 1$, we can find a constant $N_{\epsilon}$ such that $\forall n \geq N_{\epsilon}$, we have 
\[
G_2 \leq 2c (K-1)^{\frac{\beta}{p}} \epsilon^{-\beta} (A(n)+1)^{-\frac{\beta}{p}} \frac{(A(n)+1)^{-(\alpha - \frac{\beta}{p}) + 1}}{-(\alpha - \frac{\beta}{p}) + 1} = \frac{2c(K-1)^{\frac{\beta}{p}}}{-(\alpha - \frac{\beta}{p}) + 1}\epsilon^{-\beta} (A(n)+1)^{-(\alpha - 1)}.
\]
Therefore, we have
\begin{flalign}
    \bP\left( \frac{T_{h,i}(n)}{n} \left( \left| \widehat{f}_{h,i,T_{h,i}(n)} - f^{*}_{h,i} \right|\right)^p  > \frac{1}{K}\epsilon^p \right) \leq \frac{C_{2}}{n^{b-2}}+\frac{C_{1}A(n)^{3-b}}{b-3} + \frac{2c(K-1)^{\frac{\beta}{p}}}{-(\alpha - \frac{\beta}{p}) + 1}\epsilon^{-\beta} (A(n)+1)^{-(\alpha - 1)}.
\end{flalign}
that concludes for the Inequality~\ref{lem:upper_bound_suboptimal_arm_in_power_mean:case_1}.\\  
\textbf{Case 2:} $\alpha - \frac{\beta}{p} > 0$, which can only happen if $p > 2$ because $\alpha \leq \frac{\beta}{2}$. We have
\begin{flalign}
    \sum^{A(n)+1}_{t=1} t^{-(\alpha - \frac{\beta}{p})} &\leq 1 + \int^{A(n)+1}_1 t^{-(\alpha - \frac{\beta}{p})} dt = 1 + \left(\frac{t^{-(\alpha - \frac{\beta}{p}) + 1}}{-(\alpha - \frac{\beta}{p}) + 1} + C\right) \bigg|^{A(n)+1}_1\nonumber \\ 
    &= 1 + \frac{(A(n)+1)^{-(\alpha - \frac{\beta}{p}) + 1}}{-(\alpha - \frac{\beta}{p}) + 1} - \frac{1}{-(\alpha - \frac{\beta}{p}) + 1} \nonumber \\
    &= \frac{\alpha - \frac{\beta}{p}}{\alpha - \frac{\beta}{p} -1} - \frac{(A(n)+1)^{-(\alpha - \frac{\beta}{p}) + 1}}{(\alpha - \frac{\beta}{p}) - 1}, \nonumber
\end{flalign}
    so that 
    \begin{flalign}
    G_2 &\leq c  (K-1)^{\frac{\beta}{p}} \left(\frac{\alpha - \frac{\beta}{p}}{\alpha - \frac{\beta}{p} -1} - \frac{(A(n)+1)^{-(\alpha - \frac{\beta}{p}) + 1}}{(\alpha - \frac{\beta}{p}) - 1}\right) \epsilon^{-\beta} n^{-\frac{\beta}{p}} \nonumber \\
    &= c  (K-1)^{\frac{\beta}{p}} \left( \frac{(A(n)+1)^{-(\alpha - \frac{\beta}{p}) + 1}}{1 - (\alpha - \frac{\beta}{p})} - \frac{\alpha - \frac{\beta}{p}}{1 - (\alpha - \frac{\beta}{p})} \right)\epsilon^{-\beta} (A(n)+1)^{-\frac{\beta}{p}}.\nonumber
    \end{flalign}
    If $0 < \alpha - \frac{\beta}{p} < 1$, then we can find a constant $N_{G2}$ such that $\forall n \geq N_{G2}$, we have
    \[
    G_2 \leq \frac{c  (K-1)^{\frac{\beta}{p}} }{1 - (\alpha - \frac{\beta}{p})} \epsilon^{-\beta}  (A(n)+1)^{-(\alpha - 1)} = \frac{c  (K-1)^{\frac{\beta}{p}}}{1 - (\alpha - \frac{\beta}{p})} \epsilon^{-\beta}  (A(n)+1)^{-(\alpha - 1)}.
    \]
    Therefore, 
    \begin{flalign}
        \bP\left( \frac{T_{h,i}(n)}{n} \left( \left| \widehat{f}_{h,i,T_{h,i}(n)} - f^{*}_{h,i} \right|\right)^p  > \frac{1}{K}\epsilon^p \right) \leq \frac{C_{2}}{n^{b-2}}+\frac{C_{1}A(n)^{3-b}}{b-3} + \frac{c  (K-1)^{\frac{\beta}{p}}}{1 - (\alpha - \frac{\beta}{p})} \epsilon^{-\beta}  (A(n)+1)^{-(\alpha - 1)}.
    \end{flalign}
    that concludes for the Inequality~\ref{lem:upper_bound_suboptimal_arm_in_power_mean:case_2}.\\
\textbf{Case 3:} If $\alpha - \frac{\beta}{p} > 1, (p>2)$, we can find a constant $N_0$ such that $\forall n \geq N_0$, we have
    \[
    G_2 \leq c  (K-1)^{\frac{\beta}{p}} \left(\frac{\alpha - \frac{\beta}{p}}{\alpha - \frac{\beta}{p} -1} - \frac{(A(n)+1)^{-(\alpha - \frac{\beta}{p}) + 1}}{(\alpha - \frac{\beta}{p}) - 1}\right) \epsilon^{-\beta} (A(n)+1)^{-\frac{\beta}{p}} \leq \frac{c  (K-1)^{\frac{\beta}{p}} (\alpha - \frac{\beta}{p}) }{(\alpha - \frac{\beta}{p}) - 1} \epsilon^{-\beta} (A(n)+1)^{-\frac{\beta}{p}},
    \]
    that concludes for the Inequality~\ref{lem:upper_bound_suboptimal_arm_in_power_mean:case_3}
    \begin{flalign}
        \bP\left( \frac{T_{h,i}(n)}{n} \left( \left| \widehat{f}_{h,i,T_{h,i}(n)} - f^{*}_{h,i} \right|\right)^p  > \frac{1}{K}\epsilon^p \right) \leq \frac{C_{2}}{n^{b-2}}+\frac{C_{1}A(n)^{3-b}}{b-3} + \frac{c  (K-1)^{\frac{\beta}{p}} (\alpha - \frac{\beta}{p}) }{(\alpha - \frac{\beta}{p}) - 1} \epsilon^{-\beta} (A(n)+1)^{-\frac{\beta}{p}}.
    \end{flalign}
\smallskip
\noindent
\textbf{Conclusion.}  
Combining the estimates for \(G_1\) and \(G_2\) completes the proof in each of the three parameter regimes. 
\end{proof}

\begin{manualtheorem}{2}[Power Mean Concentration of \(\widehat{f}_n(p)\)] \label{lem:pm_intermediate_results_tighter_fixed_appendix}
For $a \in [K]$, let $(\widehat{f}_{a,n})_{n\geq 1}$ be a sequence of estimators satisfying $\widehat{f}_{a,n}\cv{\alpha,\beta} f^{*}_{h,i}$ and let $f^{\star} = \max_{a} \{ f^{*}_{h,i}\}$. Assume that the arms are sampled according to the strategy following Algorithm~\ref{alg:power_mean_hoo_query} with parameters $\alpha,\beta, b$ and $C$. Assume that $p,\alpha,\beta$ and $b$ satisfy one of these two conditions: 
\begin{enumerate}
    \item[(i)] $1\leq p \leq 2$ and $\alpha \leq \frac{\beta}{2}$ 
    \item[(ii)] $p > 2$ and $0 < \alpha - \frac{\beta}{p} < 1$
\end{enumerate}
If $\alpha\left(1 -\frac{b }{\alpha}\right) \leq b < \alpha$ then the sequence of estimators 
$\widehat{f}_n(p)$
satisfies \[\widehat{f}_n(p) \cv{\alpha',\beta'} f^{\star}\] for $\alpha^{\prime}=\frac{1-\frac{b}{\alpha}}{1+d^\prime + \frac{\beta}{\alpha}}\frac{b-3}{2}, \beta^{\prime}=\frac{b-3}{2}$.
\end{manualtheorem}
\begin{proof}
As the results from Lemma~\ref{lem:upper_bound_power_mean_and_optimal_appendix}, we can derive
\[
   \bigl|\widehat{f}_n(p) - f^\star\bigr|
   \;\;\le\;\;
   \frac{1}{n}\sum_{(h,i)} T_{h,i}(n)\,\bigl|f^\star - \widehat{f}_{h,i,T_{h,i}(n)}\bigr|
   \;+\;
   2\biggl(\sum_{(h,i)} \frac{T_{h,i}(n)}{n}\,\bigl|\widehat{f}_{h,i,T_{h,i}(n)} - f^{*}_{h,i}\bigr|^{p}\biggr)^{\!\frac{1}{p}},
\]
\begin{flalign}
    \Rightarrow \bP\left( \left|\widehat{f}_n(p) -  f^{*}\right| > (1+2^{1+\frac{1}{p}})\epsilon) \right) &\leq \underbrace{\bP\left( \frac{1}{n}\sum_{(h,i)} T_{h,i}(n)\,\bigl|f^\star - \widehat{f}_{h,i,T_{h,i}(n)}\bigr| > \epsilon \right)}_{A} \nonumber \\
    &+ \underbrace{\bP\left( \left(\sum_{(h,i)} \frac{T_{h,i}(n)}{n} \left( \left| \widehat{f}_{h,i,T_{h,i}(n)} - f^{*}_{h,i} \right|\right)^p \right)^{\frac{1}{p}} \geq 2^{\frac{1}{p}}\epsilon \right)}_{B} \nonumber
\end{flalign}
\textbf{\underline{Bounding $A$}}: with $x\geq 1$ We have
\begin{flalign}
        A \stackrel{(\text{Lemma}~\ref{lem:mao2020poly})}{\leq} C^{\prime}n^{-\alpha^{\prime}} \epsilon^{-\beta^\prime}  \tag{7} \label{eq_h1}
\end{flalign}
where $\alpha^{\prime}=\frac{1-\frac{b}{\alpha}}{1+d^\prime + \frac{\beta}{\alpha}}\frac{b-3}{2}, \beta^{\prime}=\frac{b-3}{2}$, and $C^{\prime}>1$ depends on $\alpha, \beta, \eta, \xi$ and $\bar{H}$.\\
\textbf{\underline{Bounding $B$}}:
    \begin{flalign}
    B &=  \bP\left( \sum_{(h,i)} \frac{T_{h,i}(n)}{n} \left( \left| \widehat{f}_{h,i,T_{h,i}(n)} - f^{*}_{h,i} \right|\right)^p  > 2\epsilon^p \right) \nonumber\\
    &\leq \underbrace{\bP\left( \frac{T_{h^*,i^*}(n)}{n} \left( \left| \widehat{f}_{h^*,i^*,T_{h^*,i^*}(n)} - f^{*}_{h^*,i^*} \right|\right)^p  > \epsilon^p \right)}_{F_1} + \underbrace{\sum^{K}_{(h,i)\neq (h^*,i^*)} \bP\left( \frac{T_{h,i}(n)}{n} \left( \left| \widehat{f}_{h,i,T_{h,i}(n)} - f^{*}_{h,i} \right|\right)^p  > \frac{1}{K-1}\epsilon^p \right)}_{F_2} \nonumber
    \end{flalign} 
    \underline{With $F_1$}: According to Lemma~\ref{lem:upper_bound_optimal_arm_in_power_mean_appendix}, we can find a constant $N_0$, such that $\forall n \geq N_0$, we have
    \begin{flalign}
        F_1 \leq \frac{C_{2}(K-1)}{n^{b-2}}+\frac{C_{1}(K-1)A(n)^{3-b}}{b-3} + \frac{c}{\alpha-1}\epsilon^{-\beta} (n - (K-1) (A(n)+1)-1)^{-\alpha + 1}
    \end{flalign}
   \underline{With $F_2$}: According to Lemma~\ref{lem:upper_bound_suboptimal_arm_in_power_mean_appendix}, we can find a constant $N_0$, such that $\forall n \geq N_0$, we have
\begin{itemize}
        \item With {$1\leq p \leq 2, \alpha \leq \frac{\beta}{p}$}, we have
        \begin{flalign}
        F_2  \leq \frac{C^{'}_{2}}{n^{b-2}}+\frac{C^{'}_{1}A(n)^{3-b}}{b-3} + \frac{2c(K-1)^{\frac{\beta}{p}}}{-(\alpha - \frac{\beta}{p}) + 1}\epsilon^{-\beta} (A(n)+1)^{-(\alpha - 1)} .
        \end{flalign}
        \item With {$p > 2$, and $0 < \alpha - \frac{\beta}{p} < 1$}, we have 
        \begin{flalign}
        F_2 \leq \frac{C^{'}_{2}}{n^{b-2}}+\frac{C^{'}_{1}A(n)^{3-b}}{b-3} + \frac{c(K-1)^{\frac{\beta}{p}}}{-(\alpha - \frac{\beta}{p}) + 1}\epsilon^{-\beta} (A(n)+1)^{-(\alpha - 1)}.
        \end{flalign}
    \end{itemize}
So that 
\begin{flalign}
    B &\leq F_1 + F_2 \leq \frac{C_{2}(K-1)}{n^{b-2}}+\frac{C_{1}(K-1)A(n)^{3-b}}{b-3} + \frac{c}{\alpha-1}\epsilon^{-\beta} (n - (K-1) (A(n)+1)-1)^{-\alpha + 1} \\
    &+ \frac{C^{'}_{2}}{n^{b-2}}+\frac{C^{'}_{1}A(n)^{3-b}}{b-3} + \frac{c(K-1)^{\frac{\beta}{p}}}{-(\alpha - \frac{\beta}{p}) + 1}\epsilon^{-\beta} (A(n)+1)^{-(\alpha - 1)} \nonumber
\end{flalign}
    where {$1\leq p \leq 2, \alpha \leq \frac{\beta}{p}$ or $p > 2$, and $0 < \alpha - \frac{\beta}{p} < 1$}.\\
Because $b-1 < \alpha - 1$, $ n^{-\frac{\alpha}{\beta}} \leq \epsilon \leq R$, and \(A(n)\approx\Theta(n^{b/\alpha})\) so that we can find a constant $N_p$ such that $\forall n \geq N_p$
\begin{flalign}
    B \leq \frac{C K \left(\frac{R}{\epsilon}\right)^\beta A(n)^{-(b-3)} }{b-3} = \frac{C K R^{\beta}\epsilon^{-\beta} A(n)^{-(b-3)} }{b-3} \label{eq_h2},
\end{flalign}
with \(C\) is a constant depends on \(C_1,C_2,C^{'}_1,C^{'}_2,K\).\\
\textbf{Combining the two parts} (\ref{eq_h1}) and (\ref{eq_h2}), we have
\[
    \Rightarrow \bP\left( \left|\widehat{f}_n(p) -  f^{*}\right| > (1+2^{1+\frac{1}{p}})\epsilon) \right) \leq C^{\prime}n^{-\alpha^{\prime}} \epsilon^{-\beta^\prime} + \frac{C K R^{\beta}\epsilon^{-\beta} A(n)^{-(b-3)} }{b-3} \nonumber
\]
where $\alpha^{\prime}=\frac{1-\frac{b}{\alpha}}{1+d^\prime + \frac{\beta}{\alpha}}\frac{b-3}{2}, \beta^{\prime}=\frac{b-3}{2}$, and $C^{\prime}>1$ depends on $\alpha, \beta, \eta, \xi$ and $\bar{H}$.
So that $\exists C^{\prime}>1$ depends on $\alpha, \beta, \eta, \xi$ and $\bar{H}$ such that
\[
    \Rightarrow \bP\left( \left|\widehat{f}_n(p) -  f^{*}\right| > \epsilon) \right) \leq C^{\prime}n^{-\alpha^{\prime}} \epsilon^{-\beta^\prime} \nonumber
\]
Furthermore, 
\begin{flalign}
&\lim_{n\longrightarrow\infty}  \left| \bE[\widehat{f}_n(p)] - f^{\star}\right| \leq \lim_{n\longrightarrow\infty}  \bE[\left|\widehat{f}_n(p)- f^{\star}\right|] = \lim_{n\longrightarrow\infty} \int^{\infty}_0 \bP \left( \left| \widehat{f}_n(p) - f^{\star} \right| \geq s \right) ds \nonumber \\
&\leq \lim_{n\longrightarrow\infty} \int^{\infty}_0 c^{'} n^{-\alpha'}s^{-\beta'} ds \leq \lim_{n\longrightarrow\infty} \int^{n^{-\frac{\alpha'}{\beta'}}}_{0} \1 ds 
 + \lim_{n\longrightarrow\infty} \int^{\infty}_{n^{-\frac{\alpha'}{\beta'}}} c^{'} n^{-\alpha'}s^{-\beta'} ds \nonumber\\
&= \lim_{n\longrightarrow\infty} c^{'} n^{-\alpha'} \left(s^{-\beta' + 1} + C \right)\Big|^{\infty}_{n^{-\frac{\alpha'}{\beta'}}} = 0 \nonumber (\text{ we need } \beta' > 1 \rightarrow \beta > 2)
\end{flalign}
Therefore, \[\widehat{f}_n(p) \cv{\alpha',\beta'} f^{\star},\]
that concludes the proof. 
\end{proof}

\begin{remark}
    The conditions on \(p,\alpha,\beta,b\) ensure that the polynomial exponents \(\alpha',\beta'\) derived from the union bounds remain strictly positive (with \(\beta'>1\)) and do not degenerate. In particular, requiring \(\alpha>2\) ensures that one can integrate the tail probability \(\int \epsilon^{-\beta'}\,d\epsilon\) for large \(\epsilon\).
\end{remark}

\section{Convergence of \texorpdfstring{\Algname} iin Monte-Carlo Tree Search}
\paragraph{Section Overview and Theorem Functionality:}
This section extends the bandit-level concentration results to the full MCTS setting, establishing polynomial convergence guarantees for the complete tree search algorithm. \textbf{Theorem~\ref{theorem2_main}} serves as the central theoretical result of the paper, proving through mathematical induction that both value function estimates $\widehat{V}_n(s_d) \cv{\alpha_d, \beta_d} \widetilde{V}(s_d)$ and Q-value estimates $\widehat{Q}_n(s_d, a) \cv{\alpha_{d+1}, \beta_{d+1}} \widetilde{Q}(s_d, a)$ concentrate polynomially at every node in the search tree. The proof establishes the base case for depth $D=1$ by showing that leaf node estimates concentrate via i.i.d.\ rollout averaging, Q-value estimates concentrate through \textbf{Lemma~\ref{lem:concQ_main}} (which handles stochastic transitions), and root value estimates concentrate via the power mean concentration results from the bandit analysis. The inductive step then demonstrates that these concentration properties propagate upward through the tree: assuming the theorem holds for subtrees of depth $D-1$, the same concentration arguments (Q-value concentration via stochastic Bellman backups and value concentration via power mean aggregation) establish the result for the full depth-$D$ tree. \textbf{Theorem~\ref{thm:theorem4_appendix}} provides the final convergence guarantee for the algorithm's expected performance, proving that $|\mathbb{E}[\widehat{V}_n(s_0)] - \widetilde{V}(s_0)| \leq \mathcal{O}(n^{-\zeta})$ where $\zeta \in (0, 1/2)$ represents the polynomial convergence rate. The proof employs Jensen's inequality to bound the expected absolute deviation by the tail probability integral $\int_0^{\infty} \mathbb{P}(|\widehat{V}_n(s_0) - \widetilde{V}(s_0)| \geq s) ds$, then uses the polynomial concentration bounds from Theorem~\ref{theorem2_main} to evaluate this integral, yielding the final rate $n^{-\alpha_0/\beta_0}$ where the exponents are determined by the algorithmic parameters. Together, these theorems establish that \Algname achieves the same polynomial convergence rates as POLY-HOOT in deterministic settings, but now extended to the significantly more challenging stochastic continuous-action domain.

Here are details proofs for each Theorem.
\begin{manualtheorem}{3}\label{theorem2_main}
When we apply the \Algname\ algorithm, with algorithmic constants 
\(\{b_{d}\}_{d=0}^{D}, \{\alpha_{d}\}_{d=0}^{D}, \{\beta_{d}\}_{d=0}^{D}\) 
satisfying the conditions in Table~\ref{algorithmic_constants}, the following hold:
\begin{enumerate}
    \item For any node \(s_d\) at depth \(d\in\{0,1,\dots,D\}\) in the tree,
    \[
       \widehat{V}_{n}(s_{d}) 
       \cv{\alpha_{d},\,\beta_{d}} 
       \widetilde{V}(s_{d}).
       \tag{1}
    \]
    \item For any node \(s_d\) at depth \(d\in\{0,1,\dots,D{-}1\}\) in the tree,
    \[
       \widehat{Q}_{n}(s_{d},a) 
       \cv{\alpha_{d+1},\,\beta_{d+1}} 
       \widetilde{Q}(s_{d},a),
       \quad\forall\,a\in\mathcal{A}_{s_{d}}.
       \tag{2}
    \]
\end{enumerate}
\end{manualtheorem}

\begin{proof}
We prove the theorem by induction on the depth \(D\) of the tree.

\medskip
\noindent
\textbf{Initial step:} \(D=1\).

\smallskip
\noindent
The root node state is \(s_0\). 
Let us denote by \(r^{t}(s_0,a)\) the intermediate reward at time-step \(t\), 
collected from playing \((s_0,a)\). 
Then the system transitions to \(s_1 \sim \mathcal{P}(\cdot \mid s_0,a)\). 
We let \(r(s_0,a)\) be the mean reward of \((s_0,a)\). 

Recall the definition of \(\widetilde{Q}(s_0,a)\):
\[
   \widetilde{Q}(s_0,a)
   \;=\;
   r(s_0,a)
   \;+\;\gamma \sum_{s_1\in\mathcal{S}}
       \mathcal{P}(s_1 \mid s_0,a)\,\widetilde{V}(s_1),
\]
where \(\widetilde{V}(s_1)\) is the average value of the rollout policy \(\pi_0\) at state \(s_1\), and \(\mathcal{A}_{s_0}\) is the set of feasible actions at \(s_0\) (with cardinality \(M\)).  The transition probability is \(\mathcal{P}(s_1 \mid s_0,a)\).

\smallskip
\noindent
Claim ($i$) in the statement says that for any state \(s_1\) with depth~1 in the tree,
\[
   \widehat{V}_n(s_1) \cv{\alpha_{1},\,\beta_{1}} \widetilde{V}(s_1).
   \tag{3}
\]
Indeed, \(\widehat{V}_n(s_1)\) is just the average of i.i.d.\ calls to the playout policy \(\pi_0\) starting at \(s_1\).  Hence (3) follows by a straightforward i.i.d.\ argument.

\smallskip
\noindent
Next, from the definition of \(\widehat{Q}_n\), we have
\begin{equation}
\label{def:qvalue_s0}
   \widehat{Q}_n(s_0,a)
   \;=\;
   \frac{1}{n}\,\sum_{t=1}^{n}
      \Bigl[r^{t}(s_0,a)
            \;+\;\gamma\,\widehat{V}_{\,T^{s_1}_{s_0,a}(t)}(s_1)\Bigr],
\end{equation}
where \(\,T^{s_1}_{s_0,a}(t)\) denotes the number of times we have visited \(\,(s_0,a,s_1)\) up to time~\(t\). 

\smallskip
\noindent
By applying Lemma~\ref{lem:concQ_main} with \(X_t := r^{t}(s_0,a)\) and probabilities \(p=(p_1,\dots,p_M)\) capturing the transition dynamics from \((s_0,a)\) to possible next-states, we conclude that
\[
   \widehat{Q}_n(s_0,a)
   \;\cv{\alpha_{1},\,\beta_{1}}\;
   \widetilde{Q}(s_0,a),
   \quad
   \forall\,a\in\mathcal{A}_{s_0}.
   \tag{4}
\]
This yields part ($ii$) of the theorem for the depth-zero node (i.e.\ the root).

\smallskip
\noindent
Finally, for the value function estimate \(\widehat{V}_n(s_0)\) at depth zero, 
we recall the polynomial concentration result from Theorem~\ref{thm:theorem2_main} 
(indeed it follows a “power-mean” or similar argument), plus the definition
\begin{equation}
\label{def:vvalue_depth_0}
   \widehat{V}_n(s_0)
   \;=\;
   \biggl(\sum_{a\in\mathcal{A}_{s_0}}
          \frac{T_{s_0,a}(n)}{n}\,
             \Bigl[\widehat{Q}_{\,T_{s_0,a}(n)}(s_0,a)\Bigr]^{p}
         \biggr)^{\tfrac{1}{p}}.
\end{equation}
Thus,
\[
\widehat{V}_n(s_0)\;\cv{\alpha_{0},\,\beta_{0}}\;\widetilde{V}(s_0),\label{two_depth_0}
\]
with \(\alpha_0,\beta_0\) satisfying the conditions in Table~\ref{algorithmic_constants}. 
Together with \(\widehat{V}_n(s_1)\cv{\alpha_1,\beta_1}\widetilde{V}(s_1)\) for leaf nodes, 
we conclude that statement~($i$) of the theorem is correct when the tree depth is \(1\).  
Statement~($ii$) is correct as well by \eqref{two_depth_0}.

\bigskip
\noindent
\textbf{Inductive step:}
Suppose the theorem holds for all trees of depth \(D{-}1\).  We prove it holds for a tree of depth \(D\).

\smallskip
\noindent
Consider the root node state \(s_0\).  When we select action \(a\) from \(s_0\), it transitions to \(s_1\sim\mathcal{P}(\cdot\mid s_0,a)\).  The subtree rooted at \(s_1\) has depth \(D{-}1\).  By the induction hypothesis, 
for any node \(s_d\) within that subtree (i.e.\ at depth \(\ge1\)), we have
\[
   \widehat{V}_n(s_d)\cv{\alpha_d,\beta_d}\widetilde{V}(s_d),
   \quad
   d=1,\dots,D,
\]
and
\[
   \widehat{Q}_n(s_d,a)\cv{\alpha_{d+1},\,\beta_{d+1}}\widetilde{Q}(s_d,a),
   \quad
   d=1,\dots,D-1.
\]
Hence, repeating the same polynomial concentration argument as in the base case but substituting \(\widehat{V}\) from deeper levels, we obtain:
\[
   \widehat{Q}_n(s_0,a)\;\cv{\alpha_1,\beta_1}\;\widetilde{Q}(s_0,a),
   \quad
   \forall\,a\in\mathcal{A}_{s_0}.
   \tag{5}
\]
Then, applying again Theorem~\ref{thm:theorem2_main} (or a power-mean argument) to the root node’s value function 
\[
   \widehat{V}_n(s_0)
   = \biggl(\sum_{a\in\mathcal{A}_{s_0}}
      \frac{T_{s_0,a}(n)}{n}\,
         \Bigl[\widehat{Q}_{T_{s_0,a}(n)}(s_0,a)\Bigr]^p
     \biggr)^{1/p},
\]
we see that 
\[
   \widehat{V}_n(s_0)\;\cv{\alpha_0,\beta_0}\;\widetilde{V}(s_0),
\]
with \(\alpha_0,\beta_0\) as per Table~\ref{algorithmic_constants}.  
Thus statement~($i$) follows for the entire tree of depth \(D\). 
Likewise, combining \(\widehat{Q}_n\) results from (5) with the induction hypothesis yields statement~($ii$).  

By induction, the statements hold for any node in any tree of depth up to \(D\). 
\end{proof}
\begin{manualtheorem} {4 (Convergence of Expected Payoff)}\label{thm:theorem4_appendix}
We have at the root node $s_{0}$, with the best possible parameter tuning that
\begin{flalign}
&\big| \E [\widehat{V}_{n}(s_{0})] - \widetilde{V}(s_{0}) \big| \leq \mathcal{O}(n^{-\zeta}), \nonumber
\end{flalign}
where $\zeta \in (0,1/2)$.
\end{manualtheorem}
\begin{proof}
Using the convexity of $f(x) = |x|$ and applying Jensen's inequality we have\begin{flalign}
&  \big| \E [\widehat{V}_{n}(s_{0})] - \widetilde{V}(s_{0}) \big| \leq \E [ \big| \widehat{V}_{n}(s_{0})] - \widetilde{V}(s_{0}) \big| ] \nonumber \\
&=  \int^{+\infty}_0 \bP \left( \left| \widehat{V}_{n}(s_{0}) - \widetilde{V}(s_{0}) \right| \geq s \right) ds \nonumber \\
&\leq  \int^{n^{-\frac{\alpha_{0}}{\beta_{0}}}}_0 1 ds + \int^{+\infty}_{n^{-\frac{\alpha_{0}}{\beta_{0}}}} c_{0} n^{-\alpha_{0}}s^{-\beta_{0}} ds \nonumber \\
&\leq n^{-\frac{\alpha_{0}}{\beta_{0}}} + c_{0} n^{-\alpha_{0}} \left( \frac{s^{-\beta_{0} + 1}}{-\beta_{0} + 1}  \right)\Big|^{+\infty}_{n^{-\frac{\alpha_{0}}{\beta_{0}}}} \nonumber \\
&= ( \frac{c_{0}}{\beta_{0} - 1}  +1) n^{-\frac{\alpha_{0}}{\beta_{0}}}. \nonumber
\end{flalign}
Because $0< \zeta = \frac{\alpha_{0}}{\beta_{0}} = \frac{1-\frac{b_0}{\alpha}}{1+d^\prime + \frac{\beta_0}{\alpha_0}}\frac{b_0-3}{2} < \frac{1}{2}$ (Theorem~\ref{thm:theorem2_main}), then the best possible rate we can estimate is 
\begin{flalign}
&\big| \E [\widehat{V}_{n}(s_{0})] - \widetilde{V}(s_{0}) \big| \leq \mathcal{O}(n^{-\zeta}),  \nonumber
\end{flalign}
where $\zeta \in (0,1/2)$, that concludes the proof.
\end{proof}
\newpage
\section{Experimental Setup and Hyperparameter Selection}

We evaluate \Algname on both classic control tasks and high-dimensional robotic environments, all adapted to continuous-action, stochastic settings. Our experimental design addresses the key challenges of planning under uncertainty while demonstrating the scalability and robustness of our approach.

\textbf{Environment Modifications:} We create stochastic versions of standard benchmarks by introducing noise at multiple levels: (1) \textit{action noise} via Gaussian perturbations to selected actions, (2) \textit{dynamics noise} through random perturbations to state transitions, and (3) \textit{observation noise} by adding Gaussian noise to state observations. Additionally, since power means require strictly positive inputs, we apply reward transformations of the form $\max(0.01, (r + \text{offset}) \times \text{scaling})$ while preserving optimal policies.

For classic control tasks from OpenAI Gym~\citet{brockman2016openai}, we evaluate on CartPole, CartPole-IG (increased gravity), Pendulum, MountainCar, and Acrobot. In the standard CartPole problem, actions are discrete, taking values in $\{-1,1\}$. To enable continuous control, we redefine the action space to $[-1,1]$. CartPole-IG features increased gravity of 20 (up from 9.8), while Acrobot uses gravity of 30, maintaining other OpenAI Gym parameters. For high-dimensional evaluation, we test on MuJoCo robotics tasks including Humanoid-v0 (17-dimensional action space, 376-dimensional state space) and Hopper-v0 (3-dimensional actions), both modified with comprehensive stochastic noise.

\textbf{Baseline Comparisons:} We compare against four established continuous MCTS methods: discretized-UCT~\citet{kocsis2006improved}, Progressive Widening (PW)~\citet{auger2013continuous}, HOOT~\citet{mansley2011sample}, and POLY-HOOT~\citet{mao2020poly}. We also include Voronoi MCTS~\citet{kim2020monte}, a recent method designed for deterministic continuous spaces, to demonstrate the challenges of applying deterministic methods to stochastic settings.

\textbf{Experimental Parameters:} Across all tasks, we use a reward discount factor of $\gamma=0.99$ and planning horizon of $T=150$ steps. The MCTS search depth is set to $D=100$ with $n=100$ simulations per state. In discretized-UCT, actions are discretized into 10 uniformly sampled values. For HOOT and \Algname, given action space dimension $m$, we set $\rho=\frac{1}{4^m}$ and $\nu_1=4m$. In \Algname, we configure HOO tree depth limit to $\bar{H}=10$ with parameters $b=5$, $\beta=20$, and $\alpha=10$. All algorithms use rollout policy $\pi_0$ initialized as $\hat{V}(s) = 0$ for all states $s \in \mathcal{S}$.

\subsection{Progressive Widening Parameter Selection}
We tune environment-specific PW parameters ($\alpha$, k, c) following:
\begin{itemize}
    \item Acrobot: $\alpha$=0.6, k=2.5, c=1.2 (timeseries strategy)
    \item MountainCar: $\alpha$=0.55, k=2.2, c=1.3 (momentum strategy) 
    \item CartPole-IG: $\alpha$=0.5, k=2.0, c=1.0 (adaptive strategy)
    \item Pendulum: $\alpha$=0.45, k=2.5, c=1.1 (optimistic strategy)
\end{itemize}
\subsection{Practical Branching Factor Analysis}
Our empirical analysis shows that despite theoretical maximum branching factors of $2^D$, practical branching factors remain efficient:

\begin{itemize}
\item Average branching factors: 1.86-2.17 across different power values
\item More than $95\%$ of nodes have only one child
\item Branching concentrates at key decision points
\item Power mean backpropagation causes algorithm to focus on promising regions
\end{itemize}

This demonstrates effective exploration-exploitation balance, with the algorithm thoroughly exploring promising regions while limiting resources on less promising areas.
\end{document}